\tikzstyle{block} = [draw, fill=white, circle]
\tikzstyle{sBlock} = [draw, ellipse, minimum width=2cm]
\algnewcommand{\Initialize}[1]{%
  \State \textbf{Initialize:}
  \Statex \hspace*{\algorithmicindent}\parbox[t]{.8\linewidth}{\raggedright #1}
}
\newtheorem{definition}{Definition}
\newtheorem{remark}{Remark}
\newtheorem{proposition}{Proposition}
\newtheorem{example}{Example}
\newcommand{\bs}[1]{\boldsymbol{#1}}
\newcommand{\pxy}[0]{\mathbb{P}_\mathrm{x}(Y)}
\newcommand{\GG}{\mathcal{G}}
\newcommand{\HH}{\mathcal{H}}
\DeclareMathOperator*{\argmax}{argmax}
\DeclareMathOperator*{\argmin}{argmin}
\newcommand{\R}{\mathbb{R}}
\definecolor{customred}{HTML}{8C1C13}
\definecolor{custom_s_color}{HTML}{264653}
\tikzset{
  node style/.style={
    circle, 
    draw=black, 
    line width=1pt, 
    minimum size=15pt, 
    align=center,
    inner sep=0pt,
    text width=15pt
  },
  directed edge/.style={-Latex, thick},
  bidirected edge/.style={Latex-Latex, line width=1.05pt, dashed, customred},
    node s/.style={node style, draw=custom_s_color, line width=1.5pt, text=custom_s_color}
}
\title{Fast Proxy Experiment Design for Causal Effect Identification}
\author[1,*]{Sepehr Elahi}
\author[1,*]{Sina Akbari}
\author[2]{Jalal Etesami}
\author[3]{Negar Kiyavash}
\author[1]{Patrick Thiran}
\affil[1]{Department of Computer and Communication Sciences, EPFL, Lausanne, Switzerland}
\affil[2]{School of Computation, Information and Technology, TUM, Munich, Germany}
\affil[3]{College of Management of Technology, EPFL, Lausanne, Switzerland}
\affil[*]{Equal contribution}
\affil[ ]{\texttt{\{sepehr.elahi, sina.akbari, negar.kiyavash, patrick.thiran\}@epfl.ch, j.etesami@tum.de}}
\date{}
\begin{document}

\maketitle

\begin{abstract}
 Identifying causal effects is a key problem of interest across many disciplines.
  The two long-standing approaches to estimate causal effects are \emph{observational} and \emph{experimental (randomized)} studies. 
  Observational studies can suffer from unmeasured confounding, which may render the causal effects unidentifiable.
  On the other hand, direct experiments on the target variable may be too costly or even infeasible to conduct.
  A middle ground between these two approaches is to estimate the causal effect of interest through \emph{proxy experiments}, which are  conducted on variables with a lower cost to intervene on compared to the main target.
  \citet{akbari-2022} studied this setting and demonstrated that the problem of designing the optimal (minimum-cost) experiment for  causal effect identification is NP-complete and provided a naive algorithm that may require solving exponentially many NP-hard problems as a sub-routine in the worst case.
  In this work, we provide a few reformulations of the problem that allow for designing significantly more efficient algorithms to solve it as witnessed by our extensive simulations. Additionally, we study the closely-related problem of designing experiments that enable us to identify a given effect through valid adjustments sets.

\end{abstract}

\section{Introduction}
\label{sec:intro}
\begin{wrapfigure}{r}{0.47\textwidth}
  \begin{center}
    \vspace{-1.8\baselineskip}
    \includegraphics[]{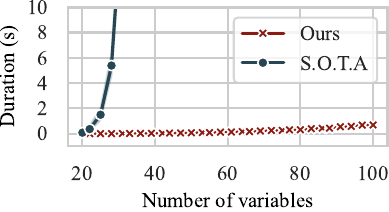}
  \end{center}
  \caption{The average runtime of our approach compared with the state-of-the-art (S.O.T.A) from \citet{akbari-2022}.}\label{fig:1}
  \vspace{-.5cm}
\end{wrapfigure}

Identifying \emph{causal effects} is a central problem of interest across many fields, ranging from epidemiology all the way to economics and social sciences.
Conducting randomized (controlled) trials provides a framework to analyze and estimate the causal effects of interest, but such experiments are not always feasible.
Even when they are, gathering sufficient data to draw statistically significant conclusions is often challenging because of the limited number of experiments often (but not solely) due to the high costs.

Observational data, which is usually more abundant and accessible, offers an alternative avenue.
However, observational studies bring upon a new challenge: the causal effect may not be \emph{identifiable} due to \emph{unmeasured confounding}, making it impossible to draw inferences based on the observed data \citep{pearl2009causality,hernan2006estimating}.

A middle ground between the two extremes of observational and experimental approaches was introduced by \citet{akbari-2022}, where the authors suggested conducting \emph{proxy experiments} to identify a causal effect that is not identifiable based on solely observational data. 
To illustrate the need for proxy experiments, consider the following drug-drug interaction example, based on the example in \citet{lee2020general}.

\begin{example}
\label{ex:drug}
\textbf{(Complex Drug Interactions and Cardiovascular Risk)}
Consider a simplified example of the interaction between antihypertensives ($X_1$), anti-diabetics ($X_2$), renal function modulators ($X_3$), and their effects on blood pressure ($W$) and cardiovascular disease ($Y$). Blood pressure and cardiovascular health are closely linked. $X_1$ can influence the need for $X_3$, and $X_3$ directly affects $W$. $X_2$ reduces cardiovascular risk ($Y$) by controlling blood sugar. $W$ directly impacts $Y$. Unmeasured factors confound these relationships: shared health conditions can influence the prescribing of both $X_1$ and $X_3$; lifestyle factors influence both $X_1$ and $W$; and common conditions like metabolic syndrome can affect both $X_1$ and $X_2$. \cref{fig:drug_ex}(a) illustrates the causal graph, where directed edges represent direct causal effects, and bidirected edges indicate unmeasured confounders. Suppose we are interested in estimating the intervention effect of $X_1$ and $X_3$ on $Y$, which is not identifiable from observational data; Moreover, we cannot directly intervene on these variables because $X_1$ and $X_3$ are essential for managing immediate, life-threatening conditions. Instead, we can intervene on $X_2$, which is a feasible and safer approach due to the broader range of treatment options and more manageable risks associated with adjusting anti-diabetic medications.
As we shall see, intervention on $X_2$ suffices for identifying the effect of $X_1$ and $X_3$ on $Y$.

\end{example}
\begin{figure}[t]
\vspace{-1.5cm}
  \centering
    \hspace{-3em}%
  \begin{minipage}[b]{0.20\linewidth}
    \centering
    \begin{tikzpicture}[node distance=0.4cm and 0.4cm, auto]
  \node[node style] (X1) { \(X_1\)};
  \node[node style, below left=of X1] (X3) { \(X_3\)};
  \node[node style, below left=of X3] (W) { \(W\)};
  \node[node style, below right=of X3] (X2) { \(X_2\)};
  \node[node style, below right=of W] (Y) { \(Y\)};

  \draw[directed edge] (X1) -- (X3);
  \draw[directed edge] (X3) -- (W);
  \draw[directed edge] (X2) -- (Y);
  \draw[directed edge] (W) -- (Y);

  \draw[bidirected edge] (X1) to[bend right=50] (X3);
  \draw[bidirected edge] (X1) to[bend left=50] (W);
  \draw[bidirected edge] (X1) to[bend left=50] (X2);
\end{tikzpicture}
    \caption*{(a)}
  \end{minipage}%
  \hspace{-1em}%
  \begin{minipage}[b]{0.25\linewidth}
    \centering
    \begin{tikzpicture}[node distance=0.4cm and 0.4cm, auto]
  \node[node style] (X1) {\(X_1\)};
  \node[node style, below left=of X1] (X3) {\(X_3\)};
  \node[node style, below right=of X1] (X2) {\(X_2\)};
  \node[node s, below left=of X2] (S) {\(S\)};

  \draw[directed edge] (X1) -- (X3);
  \draw[directed edge] (X3) -- (S);
  \draw[directed edge] (X2) -- (S);

  \draw[bidirected edge] (X1) to[bend right=50] (X3);
  \draw[bidirected edge] (X1) to[bend left=50] (X2);
  \draw[bidirected edge] (X1) to[bend left=10] (S);
\end{tikzpicture}

    \caption*{(b)}
  \end{minipage}%
  \hspace{-2em}%
  \begin{minipage}[b]{0.25\linewidth}
      \centering
  \begin{tikzpicture}[node distance=0.6cm and 0.6cm, auto]
    \node[node style] (X1) {\(X_1\)};
    \node[node style, right=of X1] (Z1) {\(Z_1\)};
    \node[node style, below=of X1] (X2) {\(X_2\)};
    \node[node style, right=of X2] (Z2) {\(Z_2\)};

\node[below=of X2, yshift=0.75cm, inner sep=0pt] (dots1) {\(\vdots\)};
\node[below=of Z2, yshift=0.75cm, inner sep=0pt] (dots2) {\(\vdots\)};

\node[node style, below=of dots1, yshift=0.65cm] (Xn) {$X_m$};
\node[node style, below=of dots2, yshift=0.65cm] (Zn) {$Z_m$};

  \node[node s, below=of Xn, xshift=0.65cm, yshift=0.3cm] (S) {\(S\)};

    \draw[directed edge] (X1) -- (X2);
    \draw[directed edge] (X1) -- (Z2);
    \draw[directed edge] (Z1) -- (X2);
    \draw[directed edge] (Z1) -- (Z2);
   \draw[directed edge] (Xn) -- (S);
    \draw[directed edge] (Zn) -- (S);
  
    \draw[bidirected edge] (X1) to[bend right=20] (X2);
    \draw[bidirected edge] (X1) to[bend right=20] (Z2);
    \draw[bidirected edge] (Z1) to[bend left=20] (X2);
    \draw[bidirected edge] (Z1) to[bend left=20] (Z2);
    \draw[bidirected edge] (X1) to[bend right=60] (S);
    \draw[bidirected edge] (Z1) to[bend left=60] (S);
  
  \end{tikzpicture}
\caption*{(c)}
\end{minipage}%
\begin{minipage}[b]{0.25\linewidth}
\centering
\begin{tikzpicture}[node distance=0.7cm and 0.7cm, auto]
  \node[node s] (S1) {\(S_1\)};
  \node[node s, above=of S1] (S2) {\(S_2\)};
  \node[node style, above=of S2] (X3) {\(X_3\)};
  \node[node style, below left=of S2, yshift=0.5cm] (X1) {\(X_1\)};
  \node[node style, above=of X1] (X2) {\(X_2\)};
  \node[node style, below right=of S2, yshift=0.5cm] (X5) {\(X_5\)};
  \node[node style, above=of X5] (X4) {\(X_4\)};

  \draw[directed edge] (X1) -- (S1);
  \draw[directed edge] (X5) -- (S1);
  \draw[directed edge] (S2) -- (S1);
  \draw[directed edge] (X2) -- (X1);
  \draw[directed edge] (X4) -- (X5);
  \draw[directed edge] (X3) -- (S2);

  \draw[bidirected edge] (S2) -- (X1);
  \draw[bidirected edge] (S2) -- (X2);
  \draw[bidirected edge] (S2) to[bend right=20] (X3);
  \draw[bidirected edge] (S2) -- (X4);
  \draw[bidirected edge] (S2) -- (X5);
  \draw[bidirected edge] (X2) to[bend right=90] (S1);
  \draw[bidirected edge] (X4) to[bend left=90] (S1);
\end{tikzpicture}
\caption*{(d)}
\end{minipage}%
  \caption{(a): Causal graph of \cref{ex:drug}. (b): Transformed graph when considering identifiability of $\mathbb{P}_{X_2, X_3}(Y)$, with $S=\{W,Y\}$. (c): A causal graph with $2^m$ minimal hedges. (d): Running example. }
  \label{fig:drug_ex}
\end{figure}

Selecting the \emph{optimal} set of proxy experiments is not straightforward in general.
In particular, \citet{akbari-2022} proved that the problem of finding the \emph{minimum-cost intervention set} to identify a given causal effect, hereon called the MCID problem, is NP-complete and provided a naive algorithm that requires solving exponentially many instances of the minimum hitting set problem in the worst case.  
As the minimum hitting set problem is NP-complete itself, this results in a doubly exponential runtime for the algorithm proposed by \citet{akbari-2022}, which is computationally intractable even for graphs with a modest number of vertices.
Moreover, \citet{akbari-2022}'s algorithm was tailored to a specific class of causal effects in which the effect of interest is a functional of an interventional distribution where the intervention is made on every variable except one \emph{district} of the causal graph\footnote{See \cref{sec:problem} the definition of a district.}.
For a general causal effect, their algorithm's complexity includes an additional (super-)exponential multiplicative factor, where the exponent is the number of districts.

In this work, we revisit the MCID problem and develop tractable algorithms by reformulating the problem as instances of well-known problems, such as the weighted maximum satisfiability and integer linear programming problems. 
Furthermore, we analyze the problem of designing minimum cost interventions to obtain a valid adjustment set for a query.
This problem not only merits attention in its own right, but also serves as a proxy for MCID.
Our contributions are as follows:

\begin{itemize}[left=5pt]
    \item We formulate the MCID problem in terms of a partially weighted maximum satisfiability, integer linear programming, submodular function maximization, and reinforcement learning problem. These reformulations allow us to propose new, and in practice, much faster algorithms for solving the problem optimally.
    \item We formulate and study the problem of designing minimum-cost experiments for identifying a given effect through finding a valid adjustments set.
    Besides the practical advantages of valid adjustment, including ease of interpretability and tractable sample complexity, this approach enables us to design a polynomial-time heuristic algorithm for the MCID problem that outperforms the heuristic algorithms provided by \citet{akbari-2022}.
    \item We present new numerical experiments that demonstrate the exceptional speed of our exact algorithms when compared to the current state-of-the-art, along with our heuristic algorithm showcasing superior performance over previous heuristic approaches.
    

\end{itemize}


\section{Problem formulation}
\label{sec:problem}
We begin by reviewing relevant graphical definitions.
An \emph{acyclic directed mixed graph} (ADMG) is a graph with directed ($\rightarrow$) and bidirected ($\leftrightarrow$) edges such that the directed edges form no cycles \citep{richardson2003admg}.
We denote an ADMG $\mathcal{G}$ by a tuple
$\mathcal{G}=\langle V, \overrightarrow{E}, \overleftrightarrow{E}\rangle$, where $V$, $\overrightarrow{E}$, and $\overleftrightarrow{E}$ represent the set of vertices, directed edges, and bidirected edges, respectively.
Note that $\overrightarrow{E}$ is a set of ordered pairs of vertices in $V$, whereas $\overleftrightarrow{E}$ is a set of unordered pairs of vertices.

Vertices of $\mathcal{G}$ represent variables of the system under consideration, while the edges represent causal relations between them.
We use the terms `variable' and `vertex' interchangeably.
When $(y,x)\in\overrightarrow{E}$, we say $y$ is a parent of $x$ and $x$ is a child of $y$.
The set of parents of $X\subseteq V$ denoted by $\mathrm{Pa}(X)=\{y:(y,x)\in\overrightarrow{E}\text{ for some }x\in X\}\setminus X$.
We denote by $\mathcal{G}[W]$, the induced subgraph of $\mathcal{G}$ over vertices $W\subseteq V$.
A subset $W$ of $V$ is said to form a \emph{district} in $\mathcal{G}$ if any pair of vertices $x,y\in W$ are connected through a bidirected path $x\leftrightarrow\dots\leftrightarrow y$ in $\mathcal{G}[W]$.
In other words, $\mathcal{G}[W]$ is a connected component through its bidirected edges.
We say $x\in V$ is an ancestor of $S\subseteq V$ if there is a directed path $x\to\dots\to s$ for some $s\in S$.
We denote the set of ancestors of $S$ in the subgraph $\mathcal{G}[W]$ by $\mathrm{Anc}_{W}(S)$.
Note that $S\subseteq\mathrm{Anc}_{W}(S)$.
When $W=V$, we drop the subscript for ease of notation.

Let $X,Y\subseteq V$ be two disjoint sets of variables.
The probability distribution of $Y$ under a (possibly hypothetical) intervention on $X$ setting its value to $\mathrm{x}$ is often represented as either $\mathbb{P}(Y^{(\mathrm{x})})
$, using Rubin's potential outcomes model \citep{rubin1974estimating}, or $\mathbb{P}(Y\mid \mathrm{do}(X=\mathrm{x}))$ using Pearl's $\mathrm{do}$ operator \citep{pearl2009causality}. 
We will adopt the shorthand $\pxy$ 
to denote this interventional distribution\footnote{This interventional distribution is often mistakenly referred to as the \emph{causal effect} of $X$ on $Y$.
However, a causal effect, such as an average treatment effect or a quantile treatment effect, is usually a specific functional of this probability distribution for different values of $\mathrm{x}$.}.


\begin{definition}[Identifiability]
    An interventional distribution $\pxy$ is identifiable given an ADMG $\GG$ and the intervention set family $\bs{\mathcal{I}}=\{\mathcal{I}_1,\dots,\mathcal{I}_t\}$, with $\mathcal{I}_i \subseteq V$, over the variables corresponding to $\GG$, if $\pxy$ is uniquely computable as a functional of the members of $\{\mathbb{P}_{\mathcal{I}}(\cdot):\mathcal{I}\in\bs{\mathcal{I}}\}$.
\end{definition}

\begin{remark}
    It is common in the literature to define identifiability with respect to observational data only (i.e., when $\bs{\mathcal{I}}=\{\mathcal{I}_1=\emptyset\}$).
    Our definition above follows what is known as the `general identifiability' from \citet{lee2020general,kivva2022revisiting}.
\end{remark}

We will now define the important notion of a \textit{hedge}, which, as we will see shortly after, is central to deciding the identifiability of an interventional distribution given the data at hand.
\begin{definition}[Hedge]\label{def:hedge}
    Let $S\subseteq V$ be a district in $\mathcal{G}$. We say $W\supsetneq S$ forms a hedge for $S$ if (i) $W$ is a district in $\mathcal{G}$, and (ii) every vertex $w\in W$ is an ancestor of $S$ in $\mathcal{G}[W]$ (i.e., $W=\mathrm{Anc}_{W}(S)$).
    We denote by $H_\GG(S)$ the set of hedges formed for $S$ in $\GG$.
\end{definition}
For example, in \cref{fig:drug_ex}(b), $S$ has two hedges given by $H_\GG(S) = \{\{S, X_3, X_1\}, \{S, X_3, X_1, X_2\}\}$.
\begin{remark}
    \cref{def:hedge} is different from the original definition of \citet{shpitser2006id}.
    The original definition was found to not correspond one-to-one with non-identifiability, as pointed out by \citet{shpitser2023does}.
    However, our modified definition above is a sound and complete characterization of non-identifiability, as it coincides with the criterion put forward by \citet{huang2006pearl} as well as the `reachable closure' of \citet{shpitser2023does}.
\end{remark}
\begin{definition}[Hedge hull \citealp{akbari-2022}]
    Let $S$ be a district in ADMG $\GG$. 
    Also let $H_\GG(S)$ be the set of all hedges formed for $S$ in $\GG$.
    The union of all hedges in $H_\GG(S)$, denoted by 
    \(\HH_\GG(S)=\bigcup_{W\in H_\GG(S)}W,\)
    is said to be the hedge hull of $S$ in $\GG$.
\end{definition}
For instance, in \cref{fig:drug_ex}(d), the hedge hull of $S_1$ is $\mathcal{H}_\GG(S_1) = \{S_1, S_2, X_1, X_2, X_3, X_4, X_5\}$ and the hedge hull of $S_2$ is $\mathcal{H}_\GG(S_2) = \{S_2, X_3\}.$
When a set $S$ consists of more than one district, we simply define the hedge hull of $S$ as the union of the hedge hulls of each district of $S$.
The hedge hull of a set can be found through a series of at most $\vert V\vert$ depth-first-searches.
For the sake of completeness, we have included the algorithm for finding a hedge hull in Appendix \ref{app:pruning}.

The following proposition from \citet{lee2020general} and \citet{kivva2022revisiting} establishes the graphical criterion for deciding the identifiability of a causal effect given a set family of interventions.
\begin{proposition}\label{prp:genid}
    Let $\GG$ be an ADMG over the vertices $V$. Also let $X,Y\subseteq V$ be disjoint sets of variables. 
    Define $S=\mathrm{Anc}_{V\setminus X}(Y)$, and let $\bs{\mathcal{S}}=\{S_1,\dots,S_r\}$ be the (unique) set of maximal districts in $\mathcal{G}[S]$.
    The interventional distribution $\pxy$ is identifiable given $\GG$ and the intervention set family $\bs{\mathcal{I}}=\{\mathcal{I}_1,\dots,\mathcal{I}_t\}$, if and only if for every $S_\ell\in\mathcal{S}$, there exists an intervention set $\mathcal{I}_k\in\bs{\mathcal{I}}$ such that (i) $\mathcal{I}_k\cap S_\ell=\emptyset$, and (ii) there is no hedge formed for $S_\ell$ in $\GG[V\setminus\mathcal{I}_k]$.
\end{proposition}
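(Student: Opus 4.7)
The plan is to reduce the identifiability of $\pxy$ to identifying a product of ``c-factor'' distributions $Q[S_\ell]:=\mathbb{P}_{V\setminus S_\ell}(S_\ell)$ over the maximal districts of $\GG[S]$, and then handle each district separately using a single-intervention hedge criterion. Concretely, since $S\supseteq Y$ and $X\cap S=\emptyset$, marginalizing out ancestrally-irrelevant variables gives $\pxy=\sum_{S\setminus Y}\mathbb{P}_{V\setminus S}(S)$, and Tian's c-component factorization on $\GG[S]$ yields $\mathbb{P}_{V\setminus S}(S)=\prod_{\ell=1}^{r}Q[S_\ell]$. Thus identifiability of $\pxy$ from $\{\mathbb{P}_{\mathcal{I}}:\mathcal{I}\in\bs{\mathcal{I}}\}$ reduces to the simultaneous identifiability of each $Q[S_\ell]$.

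For sufficiency, I would suppose that each $S_\ell$ admits some $\mathcal{I}_k\in\bs{\mathcal{I}}$ satisfying (i) and (ii). In the post-intervention graph $\GG[V\setminus\mathcal{I}_k]$, recovering $Q[S_\ell]$ from $\mathbb{P}_{\mathcal{I}_k}$ reduces to the classical observational identification problem for the district $S_\ell$. The hedge-free condition (ii)---with the modified definition of \cref{def:hedge} coinciding with the Huang--Valtorta criterion noted in the preceding remark---is precisely what guarantees identifiability via the ID algorithm. Running the algorithm once per $\ell$ and combining the results through the c-factor product yields an explicit functional expression for $\pxy$, establishing the ``if'' direction.

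For necessity, I would assume there is some $S_\ell$ for which every $\mathcal{I}_k\in\bs{\mathcal{I}}$ either intersects $S_\ell$ or still admits a hedge for $S_\ell$ in $\GG[V\setminus\mathcal{I}_k]$, and construct two SCMs $\mathcal{M}_1,\mathcal{M}_2$ compatible with $\GG$ that agree on every $\mathbb{P}_{\mathcal{I}_k}$ but differ on $\pxy$. For each offending $\mathcal{I}_k$, the hedge persisting in $\GG[V\setminus\mathcal{I}_k]$ provides the backbone for binary/XOR-type structural equations along its bidirected edges: interventions preserving the hedge cannot distinguish $\mathcal{M}_1$ from $\mathcal{M}_2$, whereas $\mathrm{do}(X=\mathrm{x})$ breaks the hedge structure and reveals the discrepancy, by the same mechanism used in the non-identifiability proofs of \citet{shpitser2006id}.

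The hard part will be the necessity direction, specifically the \emph{simultaneous} construction of a single pair of SCMs that remain indistinguishable under every $\mathcal{I}_k$ in the family while being distinguishable under $\mathrm{do}(X=\mathrm{x})$. This amounts to aligning the counter-example mechanisms on a common hedge structure that survives in all the relevant subgraphs $\GG[V\setminus\mathcal{I}_k]$, and arguing by induction on the size of the hedge (in the spirit of the generalizations in \citet{lee2020general,kivva2022revisiting}) that non-identifiability can be witnessed across the entire intervention family at once.
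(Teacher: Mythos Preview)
The paper does not provide its own proof of this proposition; it is stated as a result ``from \citet{lee2020general} and \citet{kivva2022revisiting}'' and used as a black box throughout. There is therefore nothing in the paper to compare your proposal against.

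That said, your outline tracks the architecture of the proofs in those references: reduce to $\mathbb{P}_{V\setminus S}(S)$ via ancestral closure and rule~3, c-factorize over the maximal districts of $\GG[S]$, and then treat each $Q[S_\ell]$ as a single-intervention ID problem in the appropriate post-intervention graph. Your sufficiency sketch is essentially complete. For necessity, you correctly flag the genuinely delicate step---building \emph{one} pair of models that is indistinguishable under \emph{every} $\mathcal{I}_k$ in the family---but your sketch does not yet explain how to handle the case $\mathcal{I}_k\cap S_\ell\neq\emptyset$, where no hedge need survive and the XOR-on-a-hedge construction is not directly available. In \citet{kivva2022revisiting} this is addressed via a structure (a ``thicket'') that simultaneously witnesses non-identifiability across all available experiments, and the induction is on that richer object rather than on a single hedge. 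If you intend to write out the proof in full, that is where the real work lies.
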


Note that there is no hedge formed for $S_\ell$ in $\GG[V\setminus\mathcal{I}_k]$ if and only if $\mathcal{I}_k$ \textit{hits} every hedge of $S_\ell$ (i.e., for any hedge $W\in \mathcal{H}_\GG(S_\ell)$, $\mathcal{I}_k\cap W\neq\emptyset$).
For ease of presentation, we will use \(\mathcal{I}_k\overset{\mathrm{id}}{\longrightarrow}S_\ell\) to denote that $\mathcal{I}_k\cap S_\ell=\emptyset$ and $\mathcal{I}_k$ hits every hedge formed for $S_\ell$.
For example, given the graph in \cref{fig:drug_ex}(d) and with $\bs{\mathcal{S}} = \{S_1, S_2\},$ an intervention set family that hits every hedge is $\bs{\mathcal{I}} = \{\{S_2\}, \{X_3\}\}.$


\textbf{Minimum-cost intervention for causal effect identification (MCID) problem.} Let $C:\!V\!\!\to\!\mathbb{R}^{\geq0}\cup\!\{+\infty\}$ be a known function\footnote{Although it only makes sense to assign non-negative costs to interventions, adopting non-negative costs is without loss of generality.
If certain intervention costs are negative, one can shift all the costs equally so that the most negative cost becomes zero.
This constant shift would not affect the minimization problem in any way.} indicating the cost of intervening on each vertex $v\in V$.
An infinite cost is assigned to variables where an intervention is not feasible.
Given $\GG$ and disjoint sets $X,Y\subseteq V$, our objective is to find a set family $\bs{\mathcal{I}}^*$ with minimum cost such that $\pxy$ is identifiable given $\bs{\mathcal{I}}^*$; that is, for every district $S_\ell$ of $S$, there exists $\mathcal{I}_k$ such that $\mathcal{I}_k\overset{\mathrm{id}}{\longrightarrow} S_\ell$.
Since every $\mathcal{I}\in\bs{\mathcal{I}}$ is a subset of $V$, the space of such set families is the power set of the power set of $V$.

To formalize the MCID problem, we first write the cost of a set family $\bs{\mathcal{I}}$ as
\(
C(\bs{\mathcal{I}}):=\sum_{\mathcal{I}\in\bs{\mathcal{I}}}\sum_{v\in\mathcal{I}}C(v),
\)
where with a slight abuse of notation, we denoted the cost of $\bs{\mathcal{I}}$ by $C(\bs{\mathcal{I}})$.
The MCID problem then can be formalized as follows.
\begin{equation}\label{eq:opt}
    \bs{\mathcal{I}}^*\in\argmin_{\bs{\mathcal{I}}\in 2^{2^V}}C(\bs{\mathcal{I}}) \quad\mathbf{s.t.}\quad\forall\ S_\ell\in\bs{\mathcal{S}}:(\exists\ \mathcal{I}_k
    \in\bs{\mathcal{I}}: \mathcal{I}_k\overset{\mathrm{id}}{\longrightarrow} S_\ell),
\end{equation}
where $\bs{\mathcal{S}}=\{S_1,\dots, S_r\}$ is the set of maximal districts of $S=\mathrm{Anc}_{V\setminus X}(Y)$, and $2^{2^V}$ represents the power set of the power set of $V$.
In the special case where $S$ comprises a single district, the MCID problem can be presented in a simpler way.
\begin{proposition}[\citealp{akbari-2022}]
\label{prp:single_dist}
    If $S=\mathrm{Anc}_{V\setminus X}(Y)$ comprises a single maximal district $\mathcal{S}=\{S_1=S\}$,
    then the optimization in \eqref{eq:opt} is equivalent to the following optimization:
    \begin{equation}\label{eq:opt2}
        \mathcal{I}^*\in\argmin_{\mathcal{I}\in 2^{V\setminus S}}C(\mathcal{I}) \quad\mathbf{s.t.}\quad\forall\ W\in H_\GG(S)
        :\:\mathcal{I}\cap W\neq\emptyset.
    \end{equation}
    That is, the problem reduces to finding the minimum-cost set that `hits' every hedge formed for $S$.
\end{proposition}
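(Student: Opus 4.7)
The plan is to establish the equivalence of the two optimizations by exhibiting a cost-preserving (or cost-reducing) map between their feasible sets in each direction. The key observation is that when $\bs{\mathcal{S}} = \{S\}$ consists of a single district, the constraint in \eqref{eq:opt} reduces to the existence of a single $\mathcal{I}_k \in \bs{\mathcal{I}}$ with $\mathcal{I}_k \overset{\mathrm{id}}{\longrightarrow} S$, so the remaining members of the family contribute only cost and no new identifiability power.

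For the direction from \eqref{eq:opt2} to \eqref{eq:opt}, I would take any feasible $\mathcal{I}$ in \eqref{eq:opt2} and form the singleton family $\bs{\mathcal{I}} = \{\mathcal{I}\}$. By construction $\mathcal{I} \subseteq V \setminus S$, so $\mathcal{I} \cap S = \emptyset$, and by feasibility $\mathcal{I}$ hits every hedge in $H_\GG(S)$; these two facts together give $\mathcal{I} \overset{\mathrm{id}}{\longrightarrow} S$, so $\bs{\mathcal{I}}$ is feasible for \eqref{eq:opt}. The cost $C(\bs{\mathcal{I}}) = \sum_{v \in \mathcal{I}} C(v) = C(\mathcal{I})$ is preserved exactly.

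For the direction from \eqref{eq:opt} to \eqref{eq:opt2}, I would take any feasible $\bs{\mathcal{I}}$ in \eqref{eq:opt}. Since $\bs{\mathcal{S}} = \{S\}$ contains only one district, feasibility guarantees at least one $\mathcal{I}_k \in \bs{\mathcal{I}}$ with $\mathcal{I}_k \overset{\mathrm{id}}{\longrightarrow} S$; this $\mathcal{I}_k$ satisfies $\mathcal{I}_k \cap S = \emptyset$ (hence $\mathcal{I}_k \in 2^{V \setminus S}$) and hits every hedge of $S$, so it is a feasible solution for \eqref{eq:opt2}. Non-negativity of $C$ then yields
\[
C(\mathcal{I}_k) = \sum_{v \in \mathcal{I}_k} C(v) \;\leq\; \sum_{\mathcal{I} \in \bs{\mathcal{I}}} \sum_{v \in \mathcal{I}} C(v) = C(\bs{\mathcal{I}}).
\]

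Combining the two directions, the optimal values of \eqref{eq:opt} and \eqref{eq:opt2} coincide, and from any optimizer of one problem the construction above yields an optimizer of the other. I do not anticipate a substantive obstacle here: the argument is essentially structural and leans on the single-district assumption to collapse the outer quantifier over $\bs{\mathcal{S}}$, together with the non-negativity convention on $C$ to discard the superfluous members of a family. The only minor care required is to observe explicitly that $\mathcal{I}_k \overset{\mathrm{id}}{\longrightarrow} S$ already encodes $\mathcal{I}_k \subseteq V \setminus S$, so the domain restriction in \eqref{eq:opt2} is automatic.
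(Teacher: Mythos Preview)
Your argument is correct and is precisely the natural one: collapse the family to a single witnessing set using non-negativity of costs in one direction, and embed a single set as a singleton family in the other. There is nothing to compare against here, however, because the paper does not supply its own proof of this proposition; it is quoted as a result from \citet{akbari-2022} and no proof appears in the main text or the appendix. Your write-up would serve perfectly well as the omitted justification.
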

Recall example \cref{ex:drug}, we were interested in finding the least costly proxy experiment to identify the effect of $X_2$ and $X_3$ on $Y$. By \cref{prp:single_dist}, this problem is equivalent to finding an intervention set with the least cost (i.e., a set of proxy experiments) that hits every hedge of $S = \{Y,W\}$ in the transformed graph (\cref{fig:drug_ex}(b)). If $\mathcal{C}(X_1) < \mathcal{C}(X_3)$, then the optimal solution would be $\mathcal{I}^* = \{X_1\}$.

In the remainder of the paper, we consider the problem of identification of $\mathbb{P}_X(Y)$ for a given pair $(X,Y)$, and with $S$ defined as $S=\mathrm{Anc}_{V\setminus X}(Y)$, unless otherwise stated.
We will first consider the case where $S$ comprises a single district, and then generalize our findings to multiple districts.

\section{Reformulations of the min-cost intervention problem}
\label{sec:reformulations}
In the previous section, we delineated the MCID problem as a discrete optimization problem.
This problem, cast as \cref{eq:opt}, necessitates search within a doubly exponential space, which is computationally intractable.
Algorithm~2 of \citep{akbari-2022} is an algorithm that conducts this search and eventually finds the optimal solution.
However, even when $S$ comprises a single district, this algorithm requires, in the worst case, exponentially many calls to a subroutine which solves the NP-complete minimum hitting set problem on exponentially many input sets, hence resulting in a doubly exponential complexity.
More specifically, their algorithm attempts to find a set of \textit{minimal} hedges, where minimal indicates a hedge that contains no other hedges, and solves the minimum hitting set problem on them. However, there can be exponentially many minimal hedges, as shown for example in \cref{fig:drug_ex}(c). Letting $m = n/2$, then any set that contains one vertex from each level (i.e., directed distance from $S$) is a minimal hedge, of which there are $\mathcal{O}(2^{n/2}).$

Furthermore, the computational complexity of Algorithm~2 of \citet{akbari-2022} grows super-exponentially in the number of districts of $S$.
This is due to the necessity of exhaustively enumerating every possible partitioning of these districts and executing their algorithm once for each partitioning.

In this section, we reformulate the MCID problem as a weighted partially maximum satisfiability (WPMAX-SAT) problem~\citep{pmax-sat}, and an integer linear programming (ILP) problem.
In \cref{app:reform}, we also present reformulations as a submodular maximization problem and a reinforcement learning problem.
The advantage of these new formulations is two-fold: (i) compared to Algorithm~2 of \citep{akbari-2022}, we state the problem as a \emph{single} instance of another problem for which a range of well-studied solvers exist, and (ii) these formulations allow us to propose algorithms with computational complexity that is quadratic in the number of districts of $S$. We will see how these advantages translate to drastic performance gains in \cref{sec:experiments}.

\subsection{Min-cost intervention as a WPMAX-SAT problem}
\label{sec:reform-maxsat}

We begin with constructing a 3-SAT formula $F$ that is satisfiable if and only if the given query $\mathbb{P}_X(Y)$ is identifiable.
To this end, we define $m+2$ variables $\{x_{i,j}\}_{j=0}^{m+1}$ for each vertex $v_i\in V$, where $m=\vert\mathcal{H}_\GG(S)\setminus S\vert$ is the cardinality of the hedge hull of $S$, excluding $S$.
Intuitively, $x_{i,j}$ is going to indicate whether or not vertex $v_i$ is reachable from $S$ after $j$ iterations of alternating depth-first-searches on directed and bidirected edges.
This is in line with the workings of \cref{alg:pruning} for finding the hedge hull of $S$.
In particular, if a vertex $v_i$ is reachable after $m+1$ iterations, that is, $x_{i,m+1}=1$, then $v_i$ is a member of the hedge hull of $S$.
The query of interest is identifiable if and only if $\mathcal{H}_\GG(S)=S$, that is, the hedge hull of $S$ contains no other vertices.
Therefore, we ensure that the formula $F$ is satisfiable if and only if $x_{i,m+1}=0$ for every $v_i\notin S$.
The formal procedure for constructing this formula is as follows.

\paragraph{SAT Construction Procedure.}
Suppose a causal ADMG $\GG = \langle V,\overrightarrow{E}, \overleftrightarrow{E}\rangle$ and a set $S \subset V$ are given, where $S$ is a district in $\GG$.
Suppose $\mathcal{H}_\GG(S)=\{v_1,\dots,v_n\}$ is the hedge hull of $S$ in $\GG$, where without loss of generality, $S = \{v_{m+1},\dots v_n\}$, and $\{v_1,\dots v_m\}\cap S=\emptyset$.
We will construct a corresponding boolean expression in conjunctive normal form (CNF) using variables $\{x_{i,j}\}$ for $i\in\{1,\dots,m\}$ and $j\in\{0,\dots,m+1\}$. 
For ease of presentation, we also define $x_{i,j}=1$ for all $i\in\{m+1,\dots, n\}$, $j\in\{0,\dots, m+1\}$.
The construction is carried out in $m+2$ steps, where in each step, we conjoin new clauses to the previous formula using `and'.
The procedure is as follows:
\begin{itemize}[left=5pt]
  \item For odd $j\! \in\! \{1, \ldots, m\!+\!1\}$, for each directed edge $(v_i,v_\ell)\!\in\!\overrightarrow{E}$, add $(\neg x_{i,j-1} \lor x_{i,j}\! \lor\! \neg x_{\ell,j})$ to $F$.
  \item For even $j \in \{1, \ldots, m+1\}$, for each bidirected edge $\{v_i,v_\ell\}\in\overleftrightarrow{E}$, add both clauses $(\neg x_{i,j-1} \lor x_{i,j} \lor \neg x_{\ell,j})$ and $(\neg x_{\ell,j-1} \lor x_{\ell,j} \lor \neg x_{i,j})$ to $F$.
  \item Finally, at step $m+2$, add clauses $\neg x_{i,m+1}$ to the expression $F$ for every $i\in\{1,\dots,m\}$.
\end{itemize}
  


\begin{restatable}[]{theorem}{sat}
\label{thm:sat}
  The 3-SAT formula $F$ constructed by the procedure above given $\GG$ and $S$ has a satisfying solution $\{x_{i,j}^*\}$ where $x_{i,0}^*\!=\!0$ for $i\!\in\!\mathcal{I}\!\subseteq\!\{1,\dots,m\}$ and $x_{i,0}^*\!=\!1$ for $i\!\in\!\{1,\dots,m\}\!\setminus\!\mathcal{I}$ if and only if $\mathcal{I}$ intersects every hedge formed for $S$ in $\GG$;
  i.e., $\mathcal{I}$ is a feasible solution to the optimization in \cref{eq:opt2}.
\end{restatable}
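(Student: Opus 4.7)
The plan is to interpret each variable $x_{i,j}$ semantically as an indicator for ``$v_i$ survives $j$ iterations of the alternating hedge-pruning procedure started from $V\setminus\{v_i:i\in\mathcal{I}\}$,'' where odd steps keep only ancestors of $S$ through directed edges and even steps keep only the bidirected-connected component of $S$. Under this reading the odd-$j$ directed clauses encode the ancestor-propagation rule and the even-$j$ bidirected clauses encode the district rule, while the unit clauses at step $m+2$ force no non-$S$ vertex to survive. The theorem then reduces to showing that the prescribed $x_{i,0}^*$ admits a consistent extension iff the iteration has collapsed to $S$ within $m+1$ steps, which by \cref{def:hedge} happens iff $\GG[V\setminus\{v_i:i\in\mathcal{I}\}]$ contains no hedge for $S$, i.e., iff $\mathcal{I}$ hits every hedge.

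For the ($\Leftarrow$) direction I would exhibit the assignment explicitly. Let $W_0 = \HH_\GG(S)\setminus\{v_i:i\in\mathcal{I}\}$; for odd $j$ set $W_j = \mathrm{Anc}_{W_{j-1}}(S)$, and for even $j$ set $W_j$ to be the vertex set of the bidirected-connected component of $S$ in $\GG[W_{j-1}]$; finally put $x_{i,j}^* = 1$ iff $v_i\in W_j$. An odd-$j$ directed clause is violated only if $v_i\in W_{j-1}$ has a child $v_\ell\in W_j$ yet $v_i\notin W_j$, contradicting the definition of $\mathrm{Anc}_{W_{j-1}}(S)$; the even-$j$ case is symmetric thanks to both bidirected clauses per edge. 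The key convergence fact---piggy-backing on the analysis of the paper's own \cref{alg:pruning}---is that $m+1$ alternating prunes suffice to stabilize, so $W_{m+1}$ equals the hedge hull of $S$ in $\GG[V\setminus\{v_i:i\in\mathcal{I}\}]$; by hypothesis this hull is $S$, which sets $x_{i,m+1}^* = 0$ for every $i\le m$ and satisfies the step-$(m+2)$ unit clauses.

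For the ($\Rightarrow$) direction I would argue by contrapositive: suppose $\mathcal{I}\cap W = \emptyset$ for some hedge $W\in H_\GG(S)$, so $x_{i,0}^* = 1$ for every $v_i\in W$, and show by induction on $j$ that any satisfying assignment must have $x_{i,j}^* = 1$ for every $v_i\in W$. At odd $j$, for each $v_i\in W\setminus S$ pick a directed path $v_i = v_{i_0}\to v_{i_1}\to\dots\to v_{i_k}\in S$ lying inside $W$ (exists because $W = \mathrm{Anc}_W(S)$); starting from $v_{i_k}$, whose $x$-value is $1$ by the construction's convention on $S$-vertices, walk backward along the path---at each step the clause $(\neg x_{i_t,j-1}^*\lor x_{i_t,j}^*\lor\neg x_{i_{t+1},j}^*)$ combined with $x_{i_t,j-1}^* = 1$ (outer induction hypothesis) and $x_{i_{t+1},j}^* = 1$ (inner induction) forces $x_{i_t,j}^* = 1$. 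The even-$j$ case is symmetric, using a bidirected path inside the district $W$ together with the fact that both orientations of the bidirected clause are included in $F$. Taking $j = m+1$ yields some $i\le m$ with $x_{i,m+1}^* = 1$, contradicting the unit clause $\neg x_{i,m+1}$. The main obstacle I anticipate is the convergence bound of exactly $m+1$ alternating prunes---ensuring that the schedule stabilizes within this horizon rather than requiring roughly $2m$ iterations---together with the bookkeeping nuisance that $S$-vertices hardwire $x_{i,j} = 1$, which one must invoke to launch every backward propagation.
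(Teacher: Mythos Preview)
Your proposal is correct and follows the paper's proof almost verbatim in both directions: the same pruning-based explicit assignment for ($\Leftarrow$) and the same inductive propagation along directed/bidirected paths inside a surviving hedge for ($\Rightarrow$).

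The convergence obstacle you flag is the one place where the paper does something you did not: rather than appealing to \cref{alg:pruning}, the paper gives a self-contained counting argument. Each sequence $\{x_{i,j}^*\}_{j}$ is non-increasing and binary, so across $j\in\{1,\dots,m+1\}$ there are at most $m$ indices at which some $x_{i,\cdot}^*$ strictly drops. Hence either every $i\le m$ drops exactly once (forcing $x_{i,m+1}^*=0$ for all such $i$), or there exist at least two fixed-point indices, in particular some $j>1$ with $W_j=W_{j-1}$. In the latter case $W_{j-1}$ is simultaneously closed under the ancestor operation (from the no-drop at step $j$) and is already a bidirected component of $S$ (from its construction at step $j-1$), so it is a hedge disjoint from $\mathcal{I}$, contradicting the hypothesis. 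Thus $m+1$ iterations suffice; your worry about needing $2m$ is unfounded.
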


The proofs of all our results appear in \cref{app:proofs}.
The first corollary of \cref{thm:sat} is that the SAT formula is always satisfiable, for instance by setting $x_{i,0}^*=0$ for every $i\in\{1,\dots,m\}$.
The second (and more important) corollary is that the optimal solution to \cref{eq:opt2} corresponds to the satisfying assignment for the SAT formula $F$ that minimizes 
\begin{equation}\label{eq:objective}
    \sum_{i=1}^m(1-x_{i,0}^*)C(v_i).
\end{equation}
This suggests that the problem in \cref{eq:opt2} can be reformulated as a weighted partial MAX-SAT (WPMAX-SAT) problem.
\textbf{WPMAX-SAT}
is a generalization of the MAX-SAT problem, where the clauses are partitioned into \emph{hard} and \emph{soft} clauses, and each soft clause is assigned a weight.
The goal is to maximize the aggregate weight of the satisfied soft clauses while satisfying all of the hard ones. 

To construct the WPMAX-SAT instance, we simply define all clauses in $F$ as hard constraints, and add a soft clause $ x_{i,0}$ with weight $C(v_i)$ for every $i\in\{1,\dots,m\}$.
The former ensures that the assignment corresponds to a feasible solution of \cref{eq:opt2}, while the latter ensures that the objective in \cref{eq:objective} is minimized -- which, consequently, minimizes the cost of the corresponding intervention.

\paragraph{Multiple districts.}
    The formulation above was presented for the case where $S$ is a single district. 
    In the more general case where $S$ has multiple maximal districts, we can extend our formulation to solve the general problem of \cref{eq:opt} instead.
    To this end, we will use the following lemma.
    \begin{restatable}{lemma}{lemmultiple}\label{lem:multiple}
        Let $\bs{\mathcal{S}}=\{S_1,\dots, S_r\}$  be the set of maximal districts of $S$, where $S=\mathrm{Anc}_{V\setminus X}(Y)$.
        There exists an intervention set family $\bs{\mathcal{I}}^*$ of size $\vert\bs{\mathcal{S}}\vert=r$ that is optimal for identifying $\mathbb{P}_X(Y)$.
    \end{restatable}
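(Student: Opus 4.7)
The plan is to show that any feasible intervention family can be pruned to one of cardinality at most $r$ without increasing its cost, and then apply this reduction to an optimal family. Concretely, fix any feasible $\bs{\mathcal{I}} = \{\mathcal{I}_1,\dots,\mathcal{I}_t\}$ for \eqref{eq:opt}. By the feasibility constraint, for every district $S_\ell \in \bs{\mathcal{S}}$ there is an index $f(\ell) \in \{1, \dots, t\}$ with $\mathcal{I}_{f(\ell)} \overset{\mathrm{id}}{\longrightarrow} S_\ell$; fix one such choice per district and set $\bs{\mathcal{I}}' := \{\mathcal{I}_{f(1)}, \dots, \mathcal{I}_{f(r)}\} \subseteq \bs{\mathcal{I}}$. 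This subfamily has at most $r$ members (fewer if $f$ is not injective), still contains a witness for every district, and, since $C(\bs{\mathcal{I}}) = \sum_{\mathcal{I}\in\bs{\mathcal{I}}}\sum_{v\in\mathcal{I}} C(v)$ is a sum of non-negative terms indexed by the members of the family, satisfies $C(\bs{\mathcal{I}}') \le C(\bs{\mathcal{I}})$.

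Applying this reduction to an optimal $\bs{\mathcal{I}}$ yields an optimal family $\bs{\mathcal{I}}^*$ with $|\bs{\mathcal{I}}^*| \le r$; if exact cardinality $r$ is required by the statement, one can pad by the empty intervention (or any dummy members that contribute zero cost), which affects neither feasibility nor cost. The main point to state carefully — rather than a genuine obstacle — is that the predicate $\mathcal{I} \overset{\mathrm{id}}{\longrightarrow} S_\ell$, defined immediately after \cref{prp:genid}, depends only on the single set $\mathcal{I}$ and the graph $\GG$, and not on the other members of the family, so dropping sets from $\bs{\mathcal{I}}$ cannot invalidate the witnesses that remain. With this observation, the proof reduces to a routine subset/summation argument and requires no graph-theoretic machinery beyond the identification criterion of \cref{prp:genid}.
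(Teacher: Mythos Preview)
Your proposal is correct and follows essentially the same approach as the paper: select one witness $\mathcal{I}_{f(\ell)}$ per district to obtain a feasible subfamily of size at most $r$ with no greater cost, then pad with empty intervention sets to reach cardinality exactly $r$. The paper phrases the pruning step slightly differently (arguing that an optimizer must already have $|\bs{\mathcal{I}}|\le r$, else the witness subfamily would be strictly cheaper), but the content is identical.
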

    Based on \cref{lem:multiple}, we can assume w.l.o.g. that the optimizer of \cref{eq:opt} contains exactly $r$ intervention sets $\mathcal{I}_1,\dots,\mathcal{I}_r$.
    We will modify the SAT construction procedure described in the previous section
    to allow for multiple districts as follows.
    For any district $S_\ell$, we will construct $r$ copies of the SAT expression, one corresponding to each intervention set $\mathcal{I}_k$, $k\in\{1,\dots,r\}$.
    Each copy is built on new sets of variables indexed by $(k,\ell)$, except the variables with index $j=0$, which are common across districts.
    We introduce variables $\{z_{k,\ell}\}_{k,\ell=1}^r$, which will serve as indicators for whether $\mathcal{I}_k$ hits all the hedges formed for $S_\ell$.
    We relax every clause corresponding to the $k$-th copy by conjoining a $\lnot z_{k,\ell}$ literal with an `or.'
    Intuitively, this is because it suffices to hit the hedges formed for $S_\ell$ with some $\mathcal{I}_k$.
    Additionally, we add the clauses $(z_{1,\ell}\lor\dots\lor z_{r,\ell})$ for any $\ell\in\{1,\dots,r\}$ to ensure that for every district, there is at least one intervention set that hits every hedge.
    This modified procedure, detailed in \cref{alg:sat}, appears in \cref{app:multipledist}.
    The following result generalizes \cref{thm:sat}.
    \begin{restatable}[]{theorem}{satgen}
    \label{thm:satgen}
    Suppose $\GG$, a set of its vertices 
      $S$ with maximal districts $\bs{\mathcal{S}}=\{S_1,\dots,S_r\}$,
      and an intervention set family 
      $\bs{\mathcal{I}}=\{\mathcal{I}_1,\dots,\mathcal{I}_r\}$ are given.
      Define $m_\ell = \vert\mathcal{H}_\GG(S_\ell)\setminus S_\ell\vert$, i.e., the cardinality of the hedge hull of $S_\ell$ excluding $S_\ell$ itself.
      The SAT formula $F$ constructed by \cref{alg:sat}
      has a satisfying solution $\{x_{i,0,k}^*\}\cup\{x_{i,j,k,\ell}^*\}\cup\{z_{k,\ell}^*\}$ where for every $\ell\in\{1,\dots,r\}$, there exists $k\in\{1,\dots,r\}$ such that (i) $z_{k,\ell}^*=1$, (ii) $x_{i,0,k}^*=0$ for every
      $i\in\mathcal{I}_k$, and (iii) $x_{i,0,k}^*=1$ for every $i\in \{1,\dots,m_\ell\}\setminus\mathcal{I}_k$, if and only if 
      $\bs{\mathcal{I}}$ is a feasible solution to optimization of \cref{eq:opt}.
    \end{restatable}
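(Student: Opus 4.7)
The plan is to reduce the multi-district theorem to \cref{thm:sat} by decomposing the formula $F$ clause-wise according to its origin. The formula produced by \cref{alg:sat} consists of two kinds of clauses: (a) for each district $S_\ell$ and each copy $k\in\{1,\dots,r\}$, a clause obtained from the single-district construction of \cref{sec:reform-maxsat} applied to $S_\ell$, relaxed by disjoining the literal $\lnot z_{k,\ell}$ (this covers both the iteration clauses for $j\in\{1,\dots,m_\ell+1\}$ and the final unit clauses $\lnot x_{i,m_\ell+1,k,\ell}$); and (b) the hitting clauses $(z_{1,\ell}\lor\cdots\lor z_{r,\ell})$ for each $\ell\in\{1,\dots,r\}$. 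Whenever $z_{k,\ell}^*=1$, the literal $\lnot z_{k,\ell}$ is false, so the type-(a) clauses for the pair $(k,\ell)$ collapse to the original single-district clauses on the variables $\{x_{i,0,k}^*\}\cup\{x_{i,j,k,\ell}^*\}_{j\geq 1}$, making \cref{thm:sat} directly applicable.

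For the forward direction, suppose $F$ admits a satisfying assignment of the stated form. Fix any $\ell$; by a type-(b) clause, there exists some $k$ with $z_{k,\ell}^*=1$, and hypotheses (ii)--(iii) of the theorem ensure that $x_{i,0,k}^*$ encodes precisely $\mathcal{I}_k$ on the index set $\{1,\dots,m_\ell\}$. Since $\lnot z_{k,\ell}$ is false, the type-(a) clauses for $(k,\ell)$ reduce to the single-district formula of \cref{thm:sat} applied to $S_\ell$, and the variables $\{x_{i,j,k,\ell}^*\}_{j\geq 1}$ satisfy it. By \cref{thm:sat}, this forces $\mathcal{I}_k\cap W\neq\emptyset$ for every hedge $W\in H_\GG(S_\ell)$, i.e., $\mathcal{I}_k\overset{\mathrm{id}}{\longrightarrow}S_\ell$. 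Since $\ell$ was arbitrary, \cref{prp:genid} yields feasibility of $\bs{\mathcal{I}}=\{\mathcal{I}_1,\dots,\mathcal{I}_r\}$ for \eqref{eq:opt}.

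For the reverse direction, suppose $\bs{\mathcal{I}}$ is feasible. For each $\ell$, pick an index $k(\ell)$ such that $\mathcal{I}_{k(\ell)}\overset{\mathrm{id}}{\longrightarrow}S_\ell$, which exists by \cref{prp:genid}. Define the assignment by $x_{i,0,k}^*=0$ iff $v_i\in\mathcal{I}_k$; $z_{k,\ell}^*=1$ iff $k=k(\ell)$; and, for each $\ell$, use the reverse direction of \cref{thm:sat} applied to $S_\ell$ on top of the already-fixed values of $x_{i,0,k(\ell)}^*$ to obtain values $\{x_{i,j,k(\ell),\ell}^*\}_{j\geq 1}$ that satisfy the single-district formula; this is possible precisely because $\mathcal{I}_{k(\ell)}$ hits every hedge of $S_\ell$. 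For pairs $(k,\ell)$ with $z_{k,\ell}^*=0$, every type-(a) clause is trivially satisfied by the literal $\lnot z_{k,\ell}$, and any remaining variables can be assigned arbitrarily. The type-(b) clauses are satisfied by $z_{k(\ell),\ell}^*=1$.

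The main subtlety is that the layer-$0$ variables $x_{i,0,k}$ are shared across districts for a fixed $k$, so the same intervention set $\mathcal{I}_k$ must simultaneously hit the hedges of \emph{every} district $\ell$ with $k(\ell)=k$. Feasibility of $\bs{\mathcal{I}}$ guarantees exactly this, and because hedge-hitting for $S_\ell$ depends only on $\mathcal{I}_k$ restricted to $\mathcal{H}_\GG(S_\ell)$, the single-district applications of \cref{thm:sat} decouple cleanly across $\ell$ on top of the shared $x_{i,0,k}^*$ assignment. A minor bookkeeping concern is aligning each district's local index set $\{1,\dots,m_\ell\}$ with the global assignment $x_{i,0,k}^*$, which is handled by using a common vertex indexing over $V$ in \cref{alg:sat}.
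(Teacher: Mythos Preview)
Your proposal is correct and follows essentially the same approach as the paper: decompose $F$ into the relaxed single-district clauses and the hitting clauses $(z_{1,\ell}\lor\cdots\lor z_{r,\ell})$, observe that whenever $z_{k,\ell}^*=1$ the former collapse to the clauses of the single-district construction for $S_\ell$, and then invoke \cref{thm:sat} in each direction. Your added discussion of the shared layer-$0$ variables and the decoupling across districts is a useful elaboration beyond the paper's terse proof, though you might also note that the clauses on lines~9--10 of \cref{alg:sat} (forcing $x_{i,\cdot,k,\ell}=1$ for $v_i\in S_\ell$) play the role of the convention $x_{i,j}=1$ from the single-district procedure and should be included among your type-(a) clauses.
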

    Constructing the corresponding WPMAX-SAT instance follows the same steps as the case for a single district, except that the soft clauses are of the form $( x_{i,0,k}\lor\lnot z_{k,\ell})$ with weight $C(v_i)$ for every $i\in\{1,\dots,m_\ell\}$ and $k\in\{1,\dots, r\}$.
    \begin{remark}
        The SAT construction of \cref{alg:sat} is advantageous because its complexity grows quadratically with the number of districts of $S$ in the worst case.
        In contrast, the runtime of algorithm proposed by \citet{akbari-2022}, when $S$ consists of multiple districts, is super-exponential in the number of districts, because they need to execute their single-district algorithm at least as many times as the number of partitions of the set $\{1,\dots,r\}$.
    \end{remark}

\paragraph{Min-cost intervention as an ILP problem.}
The WPMAX-SAT formulation of \cref{sec:reform-maxsat}
paves the way for a straightforward formulation of an integer linear program (ILP) for the MCID problem. 
ILP allows for straightforward integration of various constraints and objectives, enabling flexible modeling of potential extra constraints.
Moreover, there exist efficient and scalable solvers for ILP \citep{gearhart2013comparison, gurobi}.
To construct the ILP instance for the MCID problem,
it suffices to represent every clause in the boolean expression $F$ of \cref{alg:sat} as a linear inequality. 
For example, clauses of the form  $(\lnot a \lor b \lor \lnot c)$ is rewritten as $(1-a) + b + (1-c) \geq 1$. 
The soft constraints may be rewritten as a sum to maximize over, given by \cref{eq:objective}.

\section{Minimum-cost intervention design for adjustment criterion}\label{sec:adjustment}
A special case of identifying interventional distributions is identification through \emph{adjusting} for confounders.
A set $Z\subseteq V$ is a valid adjustment set for $\mathbb{P}_X(Y)$ if $\mathbb{P}_X(Y)$ is identified as 
\begin{equation}\label{eq:adjust}
    \mathbb{P}_X(Y) = \mathbb{E}_{\mathbb{P}}[\mathbb{P}(Y\mid X, Z)],
\end{equation}
where the expectation w.r.t. $\mathbb{P}(Z)$.
Adjustment sets have received extensive attention in the literature because of the straightforward form of the identification formula (Eq.~\ref{eq:adjust}) and the intuitive interpretation: $Z$ is the set of confounders that we need to \emph{adjust for} to identify the effect of interest.
The simple form of Eq.~\eqref{eq:adjust} has the added desirable property that its sample efficiency and asymptotic behavior are easy to analyze \citep{witte2020efficient,rotnitzky2020efficient,henckel2022graphical}.
A complete graphical criterion for adjustment sets was given by \citet{shpitser2010validity}.
As an example, when all parents of $X$ (i.e., $\mathrm{Pa}(X)$) are observable, they form a valid adjustment set. 
However, in the presence of unmeasured confounding, no valid adjustment sets may exist.
Below, we generalize the notion of adjustment sets to the interventional setting.
\begin{definition}[Generalized adjustment]\label{def:genadj}
    We say $Z\subseteq V$ is a generalized adjustment set for $\mathbb{P}_X(Y)$ under intervention $\mathcal{I}$ if $\mathbb{P}_X(Y)$ is identified as
    \(
    %
        \mathbb{P}_X(Y) = \mathbb{E}_{\mathbb{P}_\mathcal{I}}[\mathbb{P}_\mathcal{I}(Y\mid X, Z)],
    \)
    where $\mathbb{P}_\mathcal{I}(\cdot)$ represents the distribution after intervening on $\mathcal{I}\!\subseteq\!\! V$ and the expectation is w.r.t. $\mathbb{P}_\mathcal{I}(Z)$.
\end{definition}
Note that unlike the classic adjustment, the generalized adjustment is always feasible -- a trivial generalized adjustment can be formed by choosing $\mathcal{I}=X$ and $Z=\emptyset$.

Equipped with Definition \ref{def:genadj}, we can define a problem closely linked
to \cref{eq:opt2}, but with a (possibly) narrower set of solutions, which can be defined as follows:
find the minimum-cost intervention $\mathcal{I}$ such that a generalized adjustment exists for $\mathbb{P}_X(Y)$ under $\mathcal{I}$:
\begin{equation}\label{eq:opt3}
    \mathcal{I}^* = \argmin_{\mathcal{I}\in 2^V}C(\mathcal{I})\quad\textbf{s.t.}\quad\exists\ Z\subseteq V: \mathbb{P}_X(Y) = \mathbb{E}_{\mathbb{P}_\mathcal{I}}[\mathbb{P}_\mathcal{I}(Y\mid X, Z)].
\end{equation}

\textbf{Observation.}
The existence of a valid (generalized) adjustment set ensures the identifiability of $\mathbb{P}_X(Y)$.
As such, any feasible solution to the optimization above is also a feasible solution to \cref{eq:opt2}.
\cref{eq:opt3} is not only a problem that deserves attention in its own right, but also serves as a proxy for our initial problem (Eq.~\ref{eq:opt2}).

To proceed, we need the following definitions.
Given an ADMG $\GG=\langle V, \overrightarrow{E}, \overleftrightarrow{E}\rangle$,
let $\GG^d=\langle V^d, \overrightarrow{E}^d, \emptyset\rangle$ be the ADMG resulting from replacing every bidirected edge $e=\{x,y\}\in \overleftrightarrow{E}$ by a vertex $e$ and two directed edges $(e,x),(e,y)$.
In particular, $V^d = V\cup \overleftrightarrow{E}$, and $\overrightarrow{E}^d = \overrightarrow{E}\cup\{(e,x):e\in\overleftrightarrow{E}, x\in e\}$.
Note that $\GG^d$ is a directed acyclic graph (DAG).
The \emph{moralized graph} of $\GG$, denoted by $\GG^m$, is the undirected graph constructed by moralizing $\GG^d$ as follows:
The set of vertices of $\GG^m$ is $V^d$. 
Each pair of vertices $x,y\in V^d$ are connected by an (undirected) edge if either (i) $(x,y)\in \overrightarrow{E}^d$, or (ii) $\exists z\in V^d$ such that $\{(x,z), (y,z)\}\subseteq\overrightarrow{E}^d$.

Throughout this section, we assume without loss of generality that $X$ is minimal in the following sense:
there exists no proper subset $X_1\subsetneq X$ such that $\mathbb{P}_{X}(Y) = \mathbb{P}_{X_1}(Y)$ everywhere\footnote{This is to say, the third rule of do calculus does not apply to $\mathbb{P}_X(Y)$. More precisely, there is no $x\in X$ such that $Y$ is d-separated from $x$ given $X\setminus\{x\}$ in $\GG_{\overline{X}}$.}.
Otherwise, we apply the third rule of do calculus \citep{pearl2009causality} as many times as possible to make $X$ minimal.
We also assume w.l.o.g. that $V=\mathrm{Anc}(X\cup Y)$ as other vertices are irrelevant for our purposes \citep{lee-2020}.
We will utilize the following graphical criterion for generalized adjustment.
\begin{restatable}{lemma}{lemvertexcut}\label{lem:vtxcut}
    Let $X,Y$ be two disjoint sets of vertices in $\GG$ such that $X$ is minimal as defined above.
    Set $Z\subseteq V$ is a generalized adjustment set for $\mathbb{P}_X(Y)$ under intervention $\mathcal{I}$ if (i) $Z\subseteq\mathrm{Anc}(S)$, and (ii) $Z$ is a vertex cut\footnote{This corresponds to $Z$ blocking all the backdoor paths between $\mathrm{Pa}(S)$ and $S$, in the modified graph $\GG_{\overline{\mathcal{I}}}$.} between $S$ and $\mathrm{Pa}(S)$ in $(\GG_{\overline{\mathcal{I}}\underline{\mathrm{Pa}(S)}})^m$, where $S=\mathrm{Anc}_{V\setminus X}(Y)$, and $\GG_{\overline{\mathcal{I}}\underline{\mathrm{Pa}(S)}}$ is the ADMG resulting from omitting all edges incoming to $\mathcal{I}$ and all edges outgoing of $\mathrm{Pa}(S)$.
\end{restatable}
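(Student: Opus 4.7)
The plan is to reduce the statement to a generalized back-door adjustment criterion applied in the post-intervention ADMG $\GG_{\overline{\mathcal{I}}}$, and then convert the resulting m-separation condition into the stated vertex cut via the moralization theorem. Concretely, \cref{def:genadj} asks precisely that $Z$ be a classical adjustment set for the effect of $X$ on $Y$ in the ADMG $\GG_{\overline{\mathcal{I}}}$, since $\mathbb{P}_\mathcal{I}$ factorizes according to $\GG_{\overline{\mathcal{I}}}$ via the truncated factorization. I would then invoke a known graphical adjustment criterion for ADMGs (e.g., that of \citet{shpitser2010validity}, or the generalized adjustment criterion of Perkovi\'c et al.): $Z$ is valid for $(X,Y)$ in $\GG_{\overline{\mathcal{I}}}$ provided (a)~$Z$ avoids the forbidden set, i.e., descendants of $X$ lying on a proper causal path to $Y$, and (b)~$Z$ m-separates $X$ from $Y$ in the proper back-door subgraph.

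Next, I would use the minimality of $X$ together with the assumption $V=\mathrm{Anc}(X\cup Y)$: minimality guarantees $X\subseteq\mathrm{Pa}(S)$, so every proper back-door path from $X$ to $Y$ in $\GG_{\overline{\mathcal{I}}\underline{\mathrm{Pa}(S)}}$ must enter $Y$ only after first hitting some vertex of $S=\mathrm{Anc}_{V\setminus X}(Y)$. Therefore, the back-door blocking condition (b) collapses into requiring $Z$ to m-separate $\mathrm{Pa}(S)$ from $S$ in $\GG_{\overline{\mathcal{I}}\underline{\mathrm{Pa}(S)}}$, while the forbidden set lies entirely outside $\mathrm{Anc}(S)$, so condition (i) $Z\subseteq\mathrm{Anc}(S)$ directly yields (a). Finally, I would appeal to the classical moralization theorem (extended to ADMGs through the canonical DAG $\GG^d$): m-separation between two sets of vertices contained in an ancestral set $A$ is equivalent to undirected separation in the moralized graph of the induced subgraph on $A$. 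Since $Z\cup\mathrm{Pa}(S)\cup S$ is contained in such an ancestral set after the prescribed edge deletions, the m-separation of $\mathrm{Pa}(S)$ from $S$ by $Z$ is precisely the vertex-cut condition in $(\GG_{\overline{\mathcal{I}}\underline{\mathrm{Pa}(S)}})^m$, which is what the lemma asserts.

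The main obstacle I anticipate is the careful bookkeeping in the reductions: first, verifying that $\mathbb{P}_\mathcal{I}$ does factorize according to $\GG_{\overline{\mathcal{I}}}$ so that the standard adjustment criterion applies post-intervention; second, using the minimality of $X$ to replace the generic back-door condition phrased in terms of $X$--$Y$ paths by the tighter condition of separating $\mathrm{Pa}(S)$ from $S$; and third, ensuring that the ancestral-subgraph hypothesis of the moralization theorem is met after the deletions $\overline{\mathcal{I}}$ and $\underline{\mathrm{Pa}(S)}$, so that the vertex cut in the moralized graph is genuinely equivalent to the required m-separation. Once these reductions are justified, the conclusion is immediate from the moralization equivalence.
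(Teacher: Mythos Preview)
Your approach is essentially the paper's: both combine the adjustment criterion of \citet{shpitser2010validity} in the post-intervention graph $\GG_{\overline{\mathcal{I}}}$, the Lauritzen moralization equivalence between d-separation and vertex cuts, and the minimality of $X$ to identify $X$ with $\mathrm{Pa}(S)$. The paper applies the adjustment criterion to the pair $(\mathrm{Pa}(S),S)$ rather than $(X,Y)$ directly, and then marginalizes and invokes rule~3 of do-calculus; you instead phrase everything for $(X,Y)$ from the start. Either route works.

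Two small slips to fix. First, your opening claim that \cref{def:genadj} ``asks precisely that $Z$ be a classical adjustment set for $(X,Y)$ in $\GG_{\overline{\mathcal{I}}}$'' is not quite right: classical adjustment in $\GG_{\overline{\mathcal{I}}}$ identifies $\mathbb{P}_{X\cup\mathcal{I}}(Y)$, not $\mathbb{P}_X(Y)$. You still need a rule-3 step to equate the two, namely that any $W\supseteq\mathrm{Pa}(S)$ satisfies $\mathbb{P}_W(S)=\mathbb{P}_{\mathrm{Pa}(S)}(S)=\mathbb{P}_X(S)$; the paper makes this explicit. Second, your assertion that ``the forbidden set lies entirely outside $\mathrm{Anc}(S)$'' is false as stated, since $Y\subseteq S\subseteq\mathrm{Anc}(S)$ is forbidden. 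What you actually need (and what the paper uses) is that condition~(ii) forces $Z$ to be disjoint from $S\cup\mathrm{Pa}(S)$, so $Z\subseteq\mathrm{Anc}(S)\setminus S\setminus X\subseteq\mathrm{Anc}(X)\setminus X$, which by acyclicity contains no descendant of $X$ on a proper causal path to $Y$. With these two corrections your argument goes through and matches the paper's.
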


Based on the graphical criterion of \cref{lem:vtxcut}, we present the following \emph{polynomial-time}\footnote{The computational bottleneck of the algorithm is an instance of minimum vertex cut problem, which can be solved using any off-the-shelf max-flow algorithm.} algorithm for finding an intervention set that allows for identification of the query of interest in the form of a (generalized) adjustment.
This algorithm will find the intervention set $\mathcal{I}$ and the corresponding generalized adjustment set $Z$ simultaneously.
We begin by making $X$ minimal in the sense of applicability of rule 3 of do calculus.
Then we omit all edges going out of $\mathrm{Pa}(S)$, and construct the graph $(\GG_{\underline{\mathrm{Pa}(S)}})^d=\langle V^d, \overrightarrow{E}^d,\emptyset\rangle$ as defined above -- by replacing bidirected edges with vertices representing unobserved confounding.
Finally, we construct an (undirected) vertex cut network $\GG^{vc}=\langle V^{vc}, E^{vc}\rangle$ as follows.
Each vertex $v\in V^d$ is represented by two connected vertices $v_1,v_2$ in $\GG^{vc}$.
If $v\in V$, then $v_1$ has a cost of zero, and $v_2$ has cost $C(v)$.
Otherwise, both $v_1$ and $v_2$ have infinite costs.
Intuitively, choosing $v_1$ will correspond to including $v$ in the adjustment set, whereas choosing $v_2$ in the cut would imply intervention on $v$.
We connect $v_2$ to all vertices corresponding to $\mathrm{Pa}(v)$ with index $1$, i.e., $\{w_1:(w,v)\in\overrightarrow{E}^d\}$.
This serves two purposes: (i) if $v_2$ is included in the cut (corresponding to an intervention on $v$), all connections between $v$ and its parents are broken, and (ii) when $v_2$ is not included in the cut (corresponding to no intervention on $v$), $v_2$ connects the parents of $v$ to each other, completing the necessary moralization process.
We solve for the minimum vertex cut between vertices with index $1$ corresponding to $S$ and $\mathrm{Pa}(S)$.
\cref{alg:adj} summarizes this approach.
In the solution set $J$, the vertices with index $2$ represent the vertices where an intervention is required, while those with index $1$ represent the generalized adjustment set under this intervention.
\begin{algorithm*}[t!]
  \caption{Intervention design for generalized adjustment}
  \label{alg:adj}
  \begin{algorithmic}[1]
  \Procedure{MinCostGenAdjustment}{$X, Y, \GG=\langle V, \overrightarrow{E}, \overleftrightarrow{E}\rangle, \{C(v):v\in V\}$}
    \While{$\exists x\in X$ s.t. $x$ is d-sep from $Y$ given $X\setminus\{x\}$ in $\GG_{\overline{X}}$}\Comment{Make $X$ minimal}
        \State{$X\gets X\setminus\{x\}$}
    \EndWhile
    \State{$S\gets\mathrm{Anc}_{V\setminus X}(Y)$}
    \State{$\overrightarrow{E}\gets\overrightarrow{E}\setminus\{(x,s):x\in\mathrm{Pa}(S)\}$} \Comment{$\GG_{\underline{\mathrm{Pa}(S)}}$}
    \State{$V^d\gets V\cup\overleftrightarrow{E}$, $\quad\overrightarrow{E}^d\gets \overrightarrow{E}\cup\{(e,v):e\in\overleftrightarrow{E}, v\in e\}$}\Comment{Construct $(\GG_{\underline{\mathrm{Pa}(S)}})^d$}
    \State{$V^{vc}\gets \cup_{v\in V^d}\{v_1,v_2\},\quad E^{vc}\gets \big\{\{v_1, v_2\}:v\in V^d\big\}\cup\big\{\{w_1, v_2\}: (w,v)\in\overrightarrow{E}^d\big\}$}
    \State{Construct a minimum vertex cut instance on the network $\GG^{vc}=\langle V^{vc}, E^{vc}\rangle$, with costs $0$ for any $v_1$ where $v\in V$, $C(v)$ for any $v_2$ where $v\in V$, and $\infty$ for any other vertex}
    \State{$J\gets$ the minimum vertex cut between $\{x_1:x\in\mathrm{Pa}(S)\}$ and $\{s_1:s\in S\}$}
    \State{$\mathcal{I}\gets \{v:v_2\in J\},\quad Z\gets \{v:v_1\in J\}$}
    \State{\Return $(\mathcal{I}, Z)$}
  \EndProcedure
  \end{algorithmic}
\end{algorithm*}

\begin{restatable}{theorem}{thmalgadj}\label{thm:algadj}
    Let $(\mathcal{I}, Z)$ be the output returned by \cref{alg:adj} for the query $\mathbb{P}_X(Y)$.
    Then,\\
        - $Z$ is a generalized adjustment set for $\mathbb{P}_X(Y)$ under intervention $\mathcal{I}$.
        \\
        - $\mathcal{I}$ is the minimum-cost intervention for which there exists a generalized adjustment set based on the graphical criterion of \cref{lem:vtxcut}.
\end{restatable}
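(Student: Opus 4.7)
}
The plan is to prove both claims simultaneously by establishing a cost-preserving bijection between pairs $(\mathcal{I}, Z)$ satisfying the sufficient criterion of \cref{lem:vtxcut} and the $\{x_1 : x \in \mathrm{Pa}(S)\}$-to-$\{s_1 : s \in S\}$ vertex cuts $J$ in the auxiliary undirected network $\GG^{vc}$, under the mapping $\mathcal{I} = \{v : v_2 \in J\}$ and $Z = \{v : v_1 \in J\}$. Once this bijection is in place and shown to preserve costs, the min-vertex-cut solver in the final step of \cref{alg:adj} returns a $J$ whose corresponding $\mathcal{I}$ minimizes $C(\mathcal{I})$ over all interventions admitting a generalized adjustment set through the criterion of \cref{lem:vtxcut}, while the companion $Z$ is the accompanying adjustment set, yielding both parts (a) and (b).

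The technical heart is verifying that the split-vertex gadget $\{v_1,v_2\}$, together with its internal edge $\{v_1,v_2\}$ and the incoming edges $\{w_1, v_2\}$ for every parent $w$ of $v$ in $(\GG_{\underline{\mathrm{Pa}(S)}})^d$, correctly encodes both the effect of an intervention on $v$ and the moralization of the co-parents of $v$. Deleting $v_2$ severs $v_1$ from every $w_1$ with $w\in\mathrm{Pa}(v)$, mirroring the removal of all incoming edges to $v$ in $\GG_{\overline{\mathcal{I}}}$ when $v\in\mathcal{I}$. Deleting $v_1$ alone removes $v$ as an internal waypoint of any walk while retaining $v_2$ as a length-two bridge between every pair $w_1, w'_1$ of parents of $v$, which simulates exactly the moralizing edge in $(\GG_{\overline{\mathcal{I}}\underline{\mathrm{Pa}(S)}})^m$ between co-parents of $v$. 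I would then argue by an undirected-walk unrolling that every path between $\mathrm{Pa}(S)$ and $S$ in the moralized graph lifts to a unique $\{x_1\}$-to-$\{s_1\}$ walk in $\GG^{vc}$ that traverses both copies of each intermediate vertex via its mandatory internal edge, and vice versa. Cost accounting is then immediate: cost $0$ on every $v_1$ and $C(v)$ on every $v_2$ for $v\in V$ makes the cut cost equal $C(\mathcal{I})$, while the infinite cost on both copies of each latent-confounder vertex $e\in\overleftrightarrow{E}$ ensures that no finite-cost cut places $e$ in $\mathcal{I}$ or $Z$.

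The main obstacle I anticipate is the careful bookkeeping in the lifted-walk correspondence at each internal vertex: one must verify that no shortcut arises when both $v_1$ and $v_2$ are left outside the cut, that the moralization through $v_2$ is triggered exactly when $v$ remains in the graph, and that there is no loss of optimality in restricting each $v$ to appear in at most one of $\mathcal{I}$ and $Z$. A secondary subtlety is reconciling the side condition $Z\subseteq\mathrm{Anc}(S)$ of \cref{lem:vtxcut} with the fact that \cref{alg:adj} operates on the full ancestral set $V=\mathrm{Anc}(X\cup Y)$: any vertex lying outside $\mathrm{Anc}(S)\cup\mathrm{Pa}(S)\cup S$ has no directed path into $S$ in $\GG_{\underline{\mathrm{Pa}(S)}}$ and so cannot lie on any minimum cut, which makes $Z\subseteq\mathrm{Anc}(S)$ automatic. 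Combined with the correctness of a standard polynomial-time min-vertex-cut solver on $\GG^{vc}$, this completes the proofs of (a) and (b).
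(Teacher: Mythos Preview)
Your proposal is correct and follows essentially the same approach as the paper: both directions are proved by lifting paths back and forth between $(\GG_{\overline{\mathcal{I}}\underline{\mathrm{Pa}(S)}})^m$ and $\GG^{vc}$, using exactly the gadget semantics you describe (cutting $v_2$ simulates intervention, leaving $v_2$ intact supplies the moralizing bridge among co-parents, cutting $v_1$ places $v$ in the adjustment set). Two minor remarks: the word ``bijection'' is stronger than what is needed or true---the paper only shows the two one-way implications that every cut yields a valid pair and every valid pair yields a cut of the same cost, which is all optimality requires; and your explicit handling of the side condition $Z\subseteq\mathrm{Anc}(S)$ is a nice addition that the paper leaves implicit via the standing assumption $V=\mathrm{Anc}(X\cup Y)$.
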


\begin{remark}\label{rem:heuristic}
    \cref{alg:adj} enforces identification based on (generalized) adjustment for $\mathbb{P}_X(Y)$.
    As discussed above, this algorithm can be utilized as a heuristic approach to solve the MCID problem in \eqref{eq:opt2}.
    In this case, one can run the algorithm on the hedge hull of $S$ rather than the whole graph.
    We prove in Appendix \ref{app:proofs} that the cost of this approach is always at most as high as heuristic algorithm 1 proposed by \citet{akbari-2022}, and is often in practice lower, as verified by our experiments.
\end{remark}

\section{Experiments}
\label{sec:experiments}

In this section, we present numerical experiments that showcase the empirical performance and time efficiency of our proposed exact and heuristic algorithms. A comprehensive set of numerical experiments analyzing the impact of various problem parameters on the performance of these algorithms, along with the complete implementation details, is provided in \cref{app:exp}.
We first compare the time efficiency of our exact algorithms: WPMAX-SAT and ILP, with the exact algorithm of \citet{akbari-2022}. Then, we present results pertaining to performance of our heuristic algorithm. All experiments, coded in Python, were conducted on a machine equipped two Intel Xeon E5-2680 v3 CPUs, 256GB of RAM, and running Ubuntu 20.04.3 LTS.

\begin{figure}[t!]
  \centering
  \includegraphics{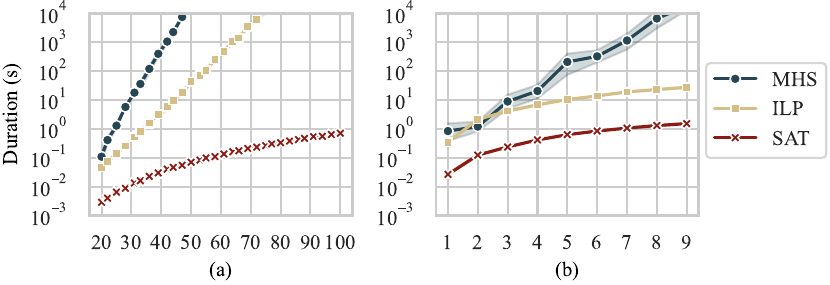}
  \caption{Average time taken by Algorithm 2 of \citet{akbari-2022} (MHS), ILP, and WPMAX-SAT to solve one graph versus (a) the number of vertices in the graph and (b) the number of districts of $S$.}
  \label{fig:exact}
\end{figure}
\begin{figure}[t!]
  \centering
  \includegraphics{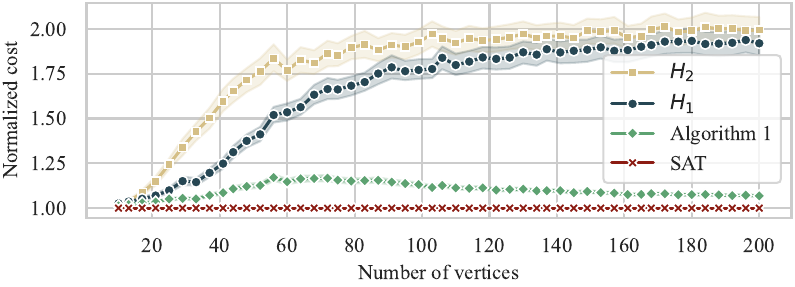}
  \caption{Average normalized cost of the heuristic algorithms $H_1$ and $H_2$ of \citet{akbari-2022} and \cref{alg:adj} versus the number of vertices in the graph.}
  \label{fig:heur}
\end{figure}
\textbf{Results on exact algorithms.} 
We compare the performance of the WPMAX-SAT formulation, the ILP formulation, and Algorithm 2 of \citet{akbari-2022}, called Minimal Hedge Solver (MHS) from hereon. 
We used the RC2 algorithm \citep{rc2-maxsat}, and the Gurobi solver \citep{gurobi}, to solve the WPMAX-SAT problem, and the ILP, respectively. We ran each algorithm for solving the MCID problem on $100$ randomly generated Erdos-Renyi \citep{Erdos:1960} ADMG graphs with directed and bidirected edge probabilities ranging from $0.01$ to $1.00$, in increments of $0.01$. We performed two sets of simulations: for single-district and  multiple-district settings, respectively. In the single-district case, we varied $n$, the number of vertices, from $20$ to $100$, while in the multiple-district case, we fixed $n=20$ and varied the number of districts from $1$ to $9$.

We plot the average time taken to solve each graph versus the number of vertices (single-district) in \cref{fig:exact}(a) and versus the number of districts ($n=20$) in \cref{fig:exact}(b). 
The error bands in our figures represent 99\% confidence intervals.
Focusing on the single-district plot, we observe that both of our algorithms are faster than MHS of \citet{akbari-2022} for all graph sizes. More specifically, ILP is on average one to two orders of magnitude faster than MHS, while WPMAX-SAT is on average four to five orders of magnitude faster. All three algorithms exhibit exponential growth in time complexity with the number of vertices, which is expected as the problem is NP-hard, but WPMAX-SAT grows at a much slower rate than the other two algorithms. This is likely due to RC2's ability of exploiting the structure of the WPMAX-SAT problem to reduce the search space efficiently. In the multiple-district case, we observe that the time complexity of both WPMAX-SAT and ILP grows polynomially with the number of districts, while the time complexity of MHS grows exponentially. This is consistent with theory, as MHS iterates over all partitions of the set of districts, which grows exponentially with the number of districts.

\textbf{Results on inexact algorithms.} 
We compared \cref{alg:adj}, our proposed heuristic,  with the two best performing heuristic algorithms in \citet{akbari-2022}, $H_1$ and $H_2$. We ran each algorithm on $500$ randomly generated Erdos-Renyi ADMG graphs with directed and bidirected edge probabilities in $\{0.1,0.5\}$, with $n$ ranging from $10$ to $200$. We randomly sampled the cost of each vertex from a discrete uniform distribution on $[1,n]$.  
In \cref{fig:heur}, we plot the normalized cost of each algorithm, computed by dividing the cost of the algorithm by the cost of the optimal solution, provided by WPMAX-SAT. Observe that \cref{alg:adj} consistently outperforms $H_1$ and $H_2$ for all graph sizes.

\section{Conclusion}\label{sec:conclusion}
We presented novel formulations and efficient algorithms for the MCID problem, offering substantial improvements over existing methods. Our work on designing minimum-cost experiments for obtaining valid adjustment sets demonstrates both practical and theoretical advancements. We highlighted the superior performance of our proposed methods through extensive numerical experiments.
We envision designing efficient approximation algorithms for MCID as future work.

\bibliography{bibliography}

\begin{thebibliography}{25}
\providecommand{\natexlab}[1]{#1}
\providecommand{\url}[1]{\texttt{#1}}
\expandafter\ifx\csname urlstyle\endcsname\relax
  \providecommand{\doi}[1]{doi: #1}\else
  \providecommand{\doi}{doi: \begingroup \urlstyle{rm}\Url}\fi

\bibitem[Akbari et~al.(2022)Akbari, Etesami, and Kiyavash]{akbari-2022}
Sina Akbari, Jalal Etesami, and Negar Kiyavash.
\newblock Minimum cost intervention design for causal effect identification.
\newblock In \emph{Proceedings of the 39th International Conference on Machine Learning}, volume 162 of \emph{Proceedings of Machine Learning Research}, pages 258--289. PMLR, 17--23 Jul 2022.
\newblock URL \url{https://proceedings.mlr.press/v162/akbari22a.html}.

\bibitem[Pearl(2009)]{pearl2009causality}
Judea Pearl.
\newblock \emph{Causality}.
\newblock Cambridge university press, 2009.

\bibitem[Hern{\'a}n and Robins(2006)]{hernan2006estimating}
Miguel~A Hern{\'a}n and James~M Robins.
\newblock Estimating causal effects from epidemiological data.
\newblock \emph{Journal of Epidemiology \& Community Health}, 60\penalty0 (7):\penalty0 578--586, 2006.

\bibitem[Lee et~al.(2020{\natexlab{a}})Lee, Correa, and Bareinboim]{lee2020general}
Sanghack Lee, Juan~D Correa, and Elias Bareinboim.
\newblock General identifiability with arbitrary surrogate experiments.
\newblock In \emph{Uncertainty in artificial intelligence}, pages 389--398. PMLR, 2020{\natexlab{a}}.

\bibitem[Richardson(2003)]{richardson2003admg}
Thomas Richardson.
\newblock Markov properties for acyclic directed mixed graphs.
\newblock \emph{Scandinavian Journal of Statistics}, 30\penalty0 (1):\penalty0 145--157, 2003.

\bibitem[Rubin(1974)]{rubin1974estimating}
Donald~B Rubin.
\newblock Estimating causal effects of treatments in randomized and nonrandomized studies.
\newblock \emph{Journal of educational Psychology}, 66\penalty0 (5):\penalty0 688, 1974.

\bibitem[Kivva et~al.(2022)Kivva, Mokhtarian, Etesami, and Kiyavash]{kivva2022revisiting}
Yaroslav Kivva, Ehsan Mokhtarian, Jalal Etesami, and Negar Kiyavash.
\newblock Revisiting the general identifiability problem.
\newblock In \emph{Uncertainty in Artificial Intelligence}, pages 1022--1030. PMLR, 2022.

\bibitem[Shpitser and Pearl(2006)]{shpitser2006id}
Ilya Shpitser and Judea Pearl.
\newblock Identification of joint interventional distributions in recursive semi-markovian causal models.
\newblock In \emph{AAAI}, pages 1219--1226, 2006.

\bibitem[Shpitser(2023)]{shpitser2023does}
Ilya Shpitser.
\newblock When does the id algorithm fail?
\newblock \emph{arXiv preprint arXiv:2307.03750}, 2023.

\bibitem[Huang and Valtorta(2006)]{huang2006pearl}
Yimin Huang and Marco Valtorta.
\newblock Pearl's calculus of intervention is complete.
\newblock In \emph{Proceedings of the 22nd Conference on Uncertainty in Artificial Intelligence, 2006}, pages 13--16, 2006.

\bibitem[Fu and Malik(2006)]{pmax-sat}
Zhaohui Fu and Sharad Malik.
\newblock On solving the partial max-sat problem.
\newblock In Armin Biere and Carla~P. Gomes, editors, \emph{Theory and Applications of Satisfiability Testing - SAT 2006}, page 252–265, Berlin, Heidelberg, 2006. Springer.
\newblock ISBN 978-3-540-37207-3.
\newblock \doi{10.1007/11814948_25}.

\bibitem[Gearhart et~al.(2013)Gearhart, Adair, Durfee, Jones, Martin, and Detry]{gearhart2013comparison}
Jared~Lee Gearhart, Kristin~Lynn Adair, Justin~David Durfee, Katherine~A Jones, Nathaniel Martin, and Richard~Joseph Detry.
\newblock Comparison of open-source linear programming solvers.
\newblock Technical report, Sandia National Lab.(SNL-NM), Albuquerque, NM (United States), 2013.

\bibitem[{Gurobi Optimization, LLC}(2023)]{gurobi}
{Gurobi Optimization, LLC}.
\newblock \emph{Gurobi Optimizer Reference Manual}, 2023.
\newblock URL \url{https://www.gurobi.com}.

\bibitem[Witte et~al.(2020)Witte, Henckel, Maathuis, and Didelez]{witte2020efficient}
Janine Witte, Leonard Henckel, Marloes~H Maathuis, and Vanessa Didelez.
\newblock On efficient adjustment in causal graphs.
\newblock \emph{Journal of Machine Learning Research}, 21\penalty0 (246):\penalty0 1--45, 2020.

\bibitem[Rotnitzky and Smucler(2020)]{rotnitzky2020efficient}
Andrea Rotnitzky and Ezequiel Smucler.
\newblock Efficient adjustment sets for population average causal treatment effect estimation in graphical models.
\newblock \emph{Journal of Machine Learning Research}, 21\penalty0 (188):\penalty0 1--86, 2020.

\bibitem[Henckel et~al.(2022)Henckel, Perkovi{\'c}, and Maathuis]{henckel2022graphical}
Leonard Henckel, Emilija Perkovi{\'c}, and Marloes~H Maathuis.
\newblock Graphical criteria for efficient total effect estimation via adjustment in causal linear models.
\newblock \emph{Journal of the Royal Statistical Society Series B: Statistical Methodology}, 84\penalty0 (2):\penalty0 579--599, 2022.

\bibitem[Shpitser et~al.(2010)Shpitser, VanderWeele, and Robins]{shpitser2010validity}
Ilya Shpitser, Tyler VanderWeele, and James~M Robins.
\newblock On the validity of covariate adjustment for estimating causal effects.
\newblock In \emph{Proceedings of the 26th Conference on Uncertainty in Artificial Intelligence, UAI 2010}, pages 527--536. AUAI Press, 2010.

\bibitem[Lee et~al.(2020{\natexlab{b}})Lee, Correa, and Bareinboim]{lee-2020}
Sanghack Lee, Juan~D. Correa, and Elias Bareinboim.
\newblock General identifiability with arbitrary surrogate experiments.
\newblock In \emph{Proceedings of The 35th Uncertainty in Artificial Intelligence Conference}, page 389–398. PMLR, August 2020{\natexlab{b}}.
\newblock URL \url{https://proceedings.mlr.press/v115/lee20b.html}.

\bibitem[Ignatiev et~al.(2019)Ignatiev, Morgado, and Marques-Silva]{rc2-maxsat}
Alexey Ignatiev, Antonio Morgado, and Joao Marques-Silva.
\newblock Rc2: an efficient maxsat solver.
\newblock \emph{Journal on Satisfiability, Boolean Modeling and Computation}, 11\penalty0 (1):\penalty0 53–64, September 2019.
\newblock ISSN 15740617.
\newblock \doi{10.3233/SAT190116}.

\bibitem[Erdos and Renyi(1960)]{Erdos:1960}
Paul Erdos and Alfred Renyi.
\newblock On the evolution of random graphs.
\newblock \emph{Publ. Math. Inst. Hungary. Acad. Sci.}, 5:\penalty0 17--61, 1960.

\bibitem[Morgado et~al.(2014)Morgado, Dodaro, and Marques-Silva]{oll-maxsat}
Antonio Morgado, Carmine Dodaro, and Joao Marques-Silva.
\newblock Core-guided maxsat with soft cardinality constraints.
\newblock In Barry O’Sullivan, editor, \emph{Principles and Practice of Constraint Programming}, page 564–573, Cham, 2014. Springer International Publishing.
\newblock ISBN 978-3-319-10428-7.
\newblock \doi{10.1007/978-3-319-10428-7_41}.

\bibitem[{IBM Corporation}(2023)]{cplex}
{IBM Corporation}.
\newblock \emph{IBM ILOG CPLEX Optimization Studio CPLEX User's Manual}, 2023.
\newblock URL \url{https://www.ibm.com/analytics/cplex-optimizer}.

\bibitem[Forrest and Lougee-Heimer(2023)]{cbc}
J.J.H. Forrest and R.~Lougee-Heimer.
\newblock \emph{CBC User Guide}.
\newblock COIN-OR, 2023.
\newblock URL \url{https://github.com/coin-or/Cbc}.

\bibitem[Lauritzen et~al.(1990)Lauritzen, Dawid, Larsen, and Leimer]{lauritzen1990independence}
Steffen~L Lauritzen, A~Philip Dawid, Birgitte~N Larsen, and H-G Leimer.
\newblock Independence properties of directed markov fields.
\newblock \emph{Networks}, 20\penalty0 (5):\penalty0 491--505, 1990.

\bibitem[Nemhauser et~al.(1978)Nemhauser, Wolsey, and Fisher]{nemhauser-1978}
G.~L. Nemhauser, L.~A. Wolsey, and M.~L. Fisher.
\newblock An analysis of approximations for maximizing submodular set functions—i.
\newblock \emph{Mathematical Programming}, 14\penalty0 (1):\penalty0 265–294, December 1978.
\newblock ISSN 1436-4646.
\newblock \doi{10.1007/BF01588971}.

\end{thebibliography}

\clearpage

\appendix

\begin{center}
    \bfseries \Large Appendix 
\end{center}

\section{Implementation details and further experimental results}\label{app:exp}
\subsection{Implementation details}
Our codebase is implemented fully in Python. We use the \href{https://pysathq.github.io/}{PySAT} library for formulating and solving the WPMAX-SAT problem, and the \href{https://coin-or.github.io/pulp/}{PuLP} library for formulating and solving the ILP problem.

\paragraph{Solving the WPMAX-SAT problem.}
There are several algorithms to solve the WPMAX-SAT instance to optimality.
These algorithms include RC2 \citep{rc2-maxsat} and OLL \citep{oll-maxsat}, both of which are core-based algorithms that utilize unsatisfiable cores to iteratively refine the solution. 
In this context, a ``core'' refers to an unsatisfiable subset of clauses within the CNF formula that cannot be satisfied simultaneously under any assignment. 
These algorithms relax the unsatisfiable soft clauses in the core by adding relaxation variables and enforce cardinality constraints on these variables. 
By strategically increasing the bounds on these cardinality constraints or modifying the weights of soft clauses based on the cores identified, the algorithms efficiently reduce the search space and converge on the maximum weighted set of satisfiable clauses, thereby solving the WPMAX-SAT problem optimally.

\paragraph{Solving the ILP problem.}
Similarly, with the ILP formulation of the MCID problem presented in \cref{sec:reformulations}, 
we can utilize exact algorithms designed for solving ILP problems to find an optimal solution. ILP solvers work by formulating the problem with linear inequalities as constraints and integer variables that need to be optimized. Popular ILP solvers include CPLEX \citep{cplex}, Gurobi \citep{gurobi}, and the open-source solver CBC \citep{cbc}. The latter is a branch-and-cut-based solver, and cutting plane methods to explore feasible integer solutions systematically while pruning the search space based on bounds calculated during the solving process.

We use the Gurobi solver in our experiments.

\clearpage
\subsection{Extended WPMAX-SAT simulations}
We extended the simulations in \cref{sec:experiments} for up to $n=500$ vertices, and the results are presented in \cref{fig:sat-extended}. We observe that even at $n=500$, WPMAX-SAT takes around the same time as Algorithm 2 of \citet{akbari-2022} does to solve $n=40$ ($230$ s for both). Moreover, we can clearly see the exponential growth in time complexity, as expected, especially for $n > 400$.

\begin{figure}[t!]
  \centering
  \includegraphics{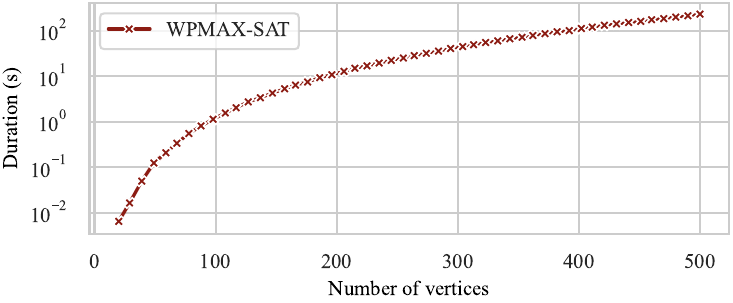}
  \caption{Semi-log plot of the average time taken by WPMAX-SAT to solve one graph versus the number of vertices in the graph.}
  \label{fig:sat-extended}
\end{figure}

\subsection{Investigating the effects of directed and bidirected edge probabilities on the performance of exact algorithms}
We run experiments on varying the probabilities of directed and bidirected edges in the graph. We fix the number of vertices at $n=20$ and vary the probabilities of directed and bidirected edges from $0.001$ to $1.00$ in increments of $0.001$. The results are presented in \cref{fig:exp_edgeprobs}. 

\begin{figure}[t!]
  \centering
  \includegraphics{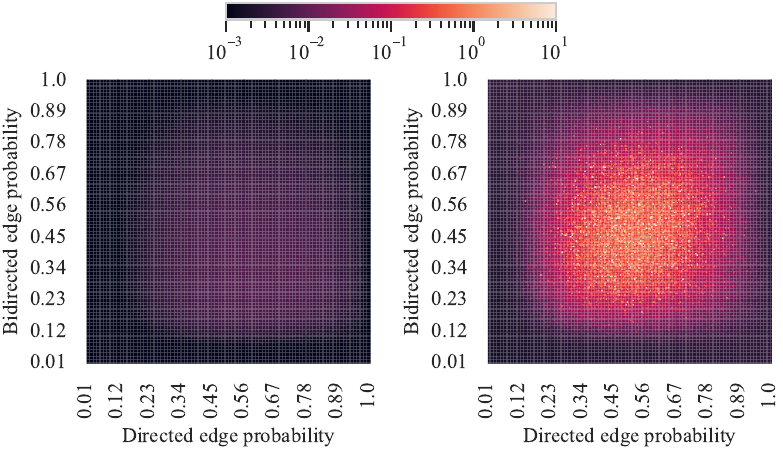}
  \caption{Heatmap of the average time taken by WPMAX-SAT (on the left) and Algorithm 2 of \citet{akbari-2022} (on the right) to solve one graph versus the probabilities of directed and bidirected edges in the graph.}
  \label{fig:exp_edgeprobs}
\end{figure}

\subsection{Investigating the effect of cost on the performance of the algorithms}
We run experiments with $n=20$ and costs sampled from a Poisson distributions with mean parameter ranging from $1$ to $100$. The results are presented in \cref{fig:exp_poisson}. Interestingly, there appears to be no clear trend in the time complexity of the algorithms with respect to the mean parameter of the Poisson distribution. This suggests that the time complexity of the algorithms is not significantly affected by the cost of the vertices.

\begin{figure}[t!]
  \centering
  \includegraphics{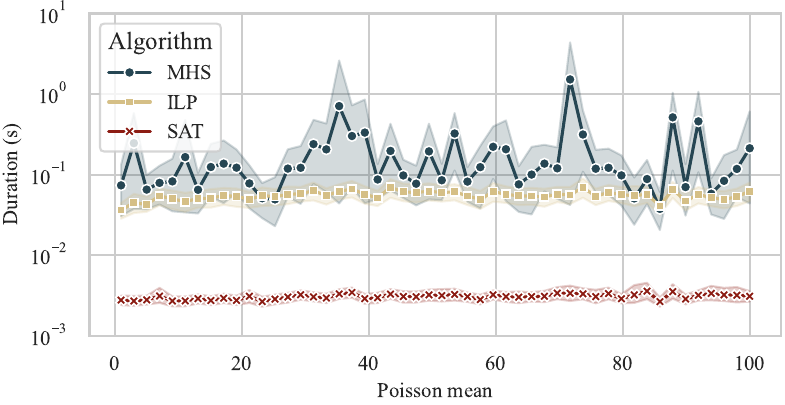}
  \caption{Average time taken by Algorithm 2 of \citet{akbari-2022} (MHS), ILP, and WPMAX-SAT to solve one graph versus the mean parameter of the Poisson distribution from which the costs are sampled.}
  \label{fig:exp_poisson}
\end{figure}

\subsection{Investigating the effects of directed and bidirected edge probabilities on the performance of the heuristic algorithms}
We run experiments on varying the probabilities of directed and bidirected edges in the graph. We vary $n$ from $n=10$ to $n=200$ and the probabilities of directed and bidirected edges in $\{0.1,0.5\}$. The results are presented in \cref{fig:exp_heur_edgeprobs}. We see that our proposed heuristic algorithm consistently outperforms the heuristic algorithms of \citet{akbari-2022} for all graph sizes and edge probabilities.

\begin{figure}[t!]
  \centering
  \includegraphics{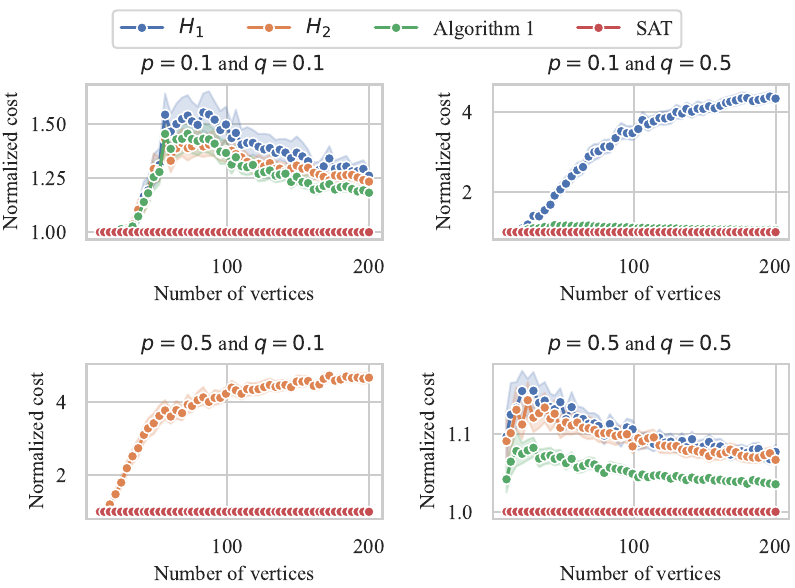}
  \caption{Average running time of the heuristic algorithms $H_1$ and $H_2$ of \citet{akbari-2022} and \cref{alg:adj} versus the number of vertices in the graph for different probabilities of directed and bidirected edges in the graph.}
  \label{fig:exp_heur_edgeprobs}
\end{figure}

\newpage
\section{Algorithms}\label{app:algorithms}
\subsection{Pruning algorithm for finding the hedge hull}
We include the algorithm for finding the hedge hull for the sake of completeness.
This algorithm is adopted from \citet{akbari-2022}.
\label{app:pruning}
\begin{algorithm*}[h]
    \caption{Pruning algorithm}
    \label{alg:pruning}
    \begin{algorithmic}[1]
        \Procedure{Prune}{$\GG=\langle V, \overrightarrow{E}, \overleftrightarrow{E}\rangle, S$}
            \State{$\mathcal{H}\gets \mathrm{Anc}_V(S)$}
            \While{True}
                \State{$\mathcal{H}'\gets \{v\in \mathcal{H}:v\textit{ has a bidirected path to }S\textit{ in }\GG[\mathcal{H}]\}$}
                \If{$\mathcal{H}'=\mathcal{H}$}
                    \State{\Return $\mathcal{H}$}
                \EndIf
                \State{$\mathcal{H}\gets \mathrm{Anc}_{\mathcal{H}'}(S)$}
                \If{$\mathcal{H}=\mathcal{H}'$}
                    \State{\Return $\mathcal{H}$}
                \EndIf
            \EndWhile
        \EndProcedure
    \end{algorithmic}
\end{algorithm*}
\subsection{SAT construction procedure for multiple districts}
\label{app:multipledist}
The procedure for constructing the SAT formula when $S$ comprises multiple districts was postponed to this section due to space limitations.
This procedure is detailed below.
\begin{algorithm*}[h]
  \caption{SAT construction}
  \label{alg:sat}
  \begin{algorithmic}[1]
  \Procedure{ConstructSAT}{$X, Y, \GG=\langle V, \overrightarrow{E}, \overleftrightarrow{E}\rangle$}
    \State{$S\gets\mathrm{Anc}_{V\setminus X}(Y)$}
    \State{$\bs{\mathcal{S}}\gets$ maximal districts of $S$ in $\GG[S]$}
    \State{$r\gets\vert\bs{\mathcal{S}}\vert$}
    \State{$F\gets 1$}
    \For{$\ell\in\{1,\dots r\}$}\Comment{iterate over districts of $S$}
        \State{$m\gets \vert \mathcal{H}_\GG(S_\ell)\setminus S_\ell\vert$}\Comment{\# iterations}
        \For{$k\in\{1,\dots,r\}$}\Comment{iterate over expressions}
            \State{$F \gets F \land (x_{i,0,k} \lor \lnot z_{k,\ell})$ for every $i$ s.t. $v_i\in S_\ell$}
            \State{$F \gets F \land (x_{i,j,k,\ell} \lor \lnot z_{k,\ell})$ for every $i$ s.t. $v_i\in S_\ell$ and every $j \in \{1,\ldots,m+1\}$}
            \For{$(v_i,v_p)\in\overrightarrow{E}$}\Comment{iteration $j=1$}
                \State{$F\gets F\land (\lnot x_{i,0,k}\lor x_{i,1,k,\ell}\lor\lnot x_{p,1,k,\ell}\lor \lnot z_{k,\ell})$}
            \EndFor
            \For{$j\in\{2,\dots,m+1\}$}
                \If{$j$ is odd}
                    \For{$(v_i,v_p)\in\overrightarrow{E}$}
                        \State{$F\gets F\land (\lnot x_{i,j-1,k,\ell}\lor x_{i,j,k,\ell}\lor\lnot x_{p,j,k,\ell}\lor \lnot z_{k,\ell})$}
                    \EndFor
                \Else
                    \For{$\{v_i,v_p\}\in\overleftrightarrow{E}$}
                        \State{$F\gets F\land (\lnot x_{i,j-1,k,\ell}\lor x_{i,j,k,\ell}\lor\lnot x_{p,j,k,\ell}\lor \lnot z_{k,\ell})$}
                        \State{$F\gets F\land (\lnot x_{p,j-1,k,\ell}\lor x_{p,j,k,\ell}\lor\lnot x_{i,j,k,\ell}\lor \lnot z_{k,\ell})$}
                    \EndFor
                \EndIf
            \EndFor
            \State{$F\gets F\land (\lnot x_{i,m+1,k,\ell} \lor \lnot z_{k,\ell})$ for every $v_i\notin S_\ell$}
        \EndFor
        \State{$F\gets F\land (z_{1,\ell}\lor\dots\lor z_{r,\ell})$}
    \EndFor
    \State{\Return $F$}
  \EndProcedure
  \end{algorithmic}
\end{algorithm*}
\section{Missing Proofs}
\label{app:proofs}
\subsection{Results of \texorpdfstring{\cref{sec:reformulations}}{Section \ref{sec:reformulations}}}
\sat*
\begin{proof}
    \textit{Proof of `if:'}
    Suppose $\mathcal{I}$ hits every hedge formed for $S$.
    We construct a satisfying solution for the SAT formula as follows.
    We begin with $x_{i,0}$:
    \[x_{i,0}^*=\begin{cases}
        0;\quad \textit{if } i\in\mathcal{I}\\
        1;\quad \textit{o.w.}
    \end{cases}\]
    For every $j\in\{1,\dots,m+1\}$, define $H_j=\{i:x_{i,j-1}=1\}$.
    Then $x_{i,j}^*$ for $j\in\{1,\dots,m+1\}$ is chosen recursively as below.
    \begin{itemize}[left=5pt]
        \item Odd $j$: $x_{i,j}^*=1$ if $i \in H_j$ and $v_i$ has a directed path to $S$ in $\GG[H_j]$, and $x_{i,j}^*=0$ otherwise.
        \item Even $j$: $x_{i,j}^*=1$ if $i \in H_j$ and $v_i$ has a bidirected path to $S$ in $\GG[H_j]$, and $x_{i,j}^*=0$ otherwise.
    \end{itemize}
    Next, we prove that $\{x_{i,j}^*\}$ as defined above satisfies $F$.
    We consider the three types of clauses in $F$ separately:
    \begin{itemize}[left=5pt]
        \item For odd $j \in \{1, \ldots, m+1\}$, the clause $(\neg x_{i,j-1} \lor x_{i,j} \lor \neg x_{\ell,j})$ corresponds to the directed edge $(v_i,v_\ell)\in\overrightarrow{E}$: if either $x_{i,j-1}^*=0$ or $x_{\ell,j}^*=0$, then this clause is trivially satisfied.
        So suppose $x_{i,j-1}^*=1$, and $x_{\ell,j}^*=1$, which implies by construction that $x_{\ell,j-1}^*=1$.
        Therefore, $i,\ell\in H_j$.
        Further, since $x_{\ell,j}^*=1$, $v_\ell$ has a directed path to $S$ in $\GG[H_j]$.
        Then $v_i$ has a directed path to $S$ in $\GG[H_j]$ because of the edge $(v_i,v_\ell)\in\overrightarrow{E}$.
        By the construction above, $x_{i,j}^*=1$, which satisfies the clause.
        \item For even $j \in \{1, \ldots, m+1\}$, the clause $(\neg x_{i,j-1} \lor x_{i,j} \lor \neg x_{\ell,j})$ corresponds to the bidirected edge $\{v_i,v_\ell\}\in\overleftrightarrow{E}$: if either $x_{i,j-1}^*=0$ or $x_{\ell,j}^*=0$, then this clause is trivially satisfied.
        So suppose $x_{i,j-1}^*=1$, and $x_{\ell,j}^*=1$, which implies by construction that $x_{\ell,j-1}^*=1$.
        Therefore, $i,\ell\in H_j$.
        Further, since $x_{\ell,j}^*=1$, $v_\ell$ has a bidirected path to $S$ in $\GG[H_j]$.
        Then $v_i$ has a bidirected path to $S$ in $\GG[H_j]$ because of the edge $\{v_i,v_\ell\}\in\overleftrightarrow{E}$.
        By the construction above, $x_{i,j}^*=1$, which satisfies the clause.
        \item The clauses $\neg x_{i,m+1}$:
        First note that by construction, if for some $j\in\{1,\dots,m+1\}$, $x_{i,j-1}^*=0$, then $x_{i,j}^*=x_{i,j+1}^*=\dots=x_{i,m}^*=0$.
        That is, $\{x_{i,j}^*\}_{j=0}^m$ is a non-increasing binary-valued sequence for every $i$.
        Therefore, for every $i\in\{1,\dots,m\}$, there exists \emph{at most} one $j$ such that $x_{i,j-1}^*> x_{i,j}^*$.
        We consider two cases separately:
        \begin{itemize}[left=5pt]
            \item There are exactly $m$ many $j\in\{1,\dots,m+1\}$ for which there exists at least one $i$ such that $x_{i,j-1}^*> x_{i,j}^*$.
            In this case, for every $i\in\{1,\dots,m\}$, there exists exactly one $j\in\{1,\dots,m+1\}$ such that $x_{i,j-1}^*> x_{i,j}^*$.
            Then for every $i$, there exists $j$ such that $x_{i,j}^*=0$, and following the argument above, $x_{i,m+1}^*=0$.
            Hence, the clauses $\lnot x_{i,m+1}$ are all satisfied.
            \item There are strictly less than $m$ many $j\in\{1,\dots,m+1\}$ for which there exists at least one $i$ such that $x_{i,j-1}^*> x_{i,j}^*$.
            Then there exist $j,j'\in\{1,\dots,m+1\}$ such that for every $i\in\{1,\dots,m\}$, $x_{i,j-1}^*=x_{i,j}^*$ and $x_{i,j'-1}^*=x_{x,j'}^*$.
            Assume without loss of generality that $j'<j$ and therefore, $j>1$.
            If $x_{i,j}^*=0$ for every $i$, then by similar arguments as the previous case, $x_{i,m+1}^*=0$ and the clauses $\lnot x_{i,m}$ are satisfied.
            So suppose for the sake of contradiction that there exists a non-empty set $H_j=\{i:x_{i,j-1}^*=1\}\neq\emptyset$.
            Note that $H_{j+1}\coloneqq\{i:x_{i,j}^*=1\}=H_j$, since $x_{i,j-1}^*=x_{i,j}^*$ for every $i$.
            Moreover, $\mathcal{I}\cap H_j=\emptyset$ since $x_{i,0}^*=0$ for every $i\in\mathcal{I}$ and $\{x_{i,k}^*\}_k$ is non-increasing.
            Assume without loss of generality that $j$ is odd.
            The proof is identical in case $j$ is even.
            By definition, the set of vertices $H_{j+1}=H_j$ have a directed path to $S$ in $\GG[H_j]$.
            Moreover, the set of vertices $H_j$ are those vertices in $H_{j-1}$ that have a bidirected path to $S$ in $\GG[H_{j-1}]$ (here we used $j>1$ for $H_{j-1}$ to be well-defined.)
            That is, $H_j$ is the connected component of $S$ in $\GG[H_{j-1}]$.
            The latter implies that every vertex in $H_j$ has a bidirected path to $S$ in $\GG[H_j]$.
            We proved that $H_j$ is a hedge formed for $S$, and $H_j\cap\mathcal{I}=\emptyset$.
            This contradicts with $\mathcal{I}$ intersecting with every hedge formed for $S$.
        \end{itemize}
    \end{itemize}

    \textit{Proof of `only if:'}
    Suppose $\{x_{i,j}^*\}$ is a satisfying solution, where $x_{i,0}^*=1$ for every $i\notin\mathcal{I}$.
    To prove $\mathcal{I}$ intersects every hedge formed for $S$, it suffices to show that there is no hedge formed for $S$ in $\GG[V\setminus\mathcal{I}]$.
    Assume, for the sake of contradiction, that this is not the case.
    That is, there exists a hedge $H\subseteq V\setminus\mathcal{I}$ formed for $S$ in $\GG$.
    Suppose for some $j\in\{1,\dots, m+1\}$, it holds that $x_{i,j-1}^*=1$ for every $v_i\in H$.
    We show that $x_{i,j}^*=1$ for every $v_i\in H$.
    We consider the following two cases separately:
    \begin{itemize}[left=5pt]
        \item Even $j$: for arbitrary $v_i,v_\ell\in H$ such that $\{v_i,v_\ell\}\in\overleftrightarrow{E}$, consider the clauses $(\neg x_{i,j-1} \lor x_{i,j} \lor \neg x_{\ell,j})$ and $(\neg x_{\ell,j-1} \lor x_{\ell,j} \lor \neg x_{i,j})$ that are in $F$ by construction for even $j$.
        Since $x_{i,j-1}^*=x_{\ell,j-1}^*=1$, the expression $( x_{i,j}^* \lor \neg x_{\ell,j}^*) \land ( x_{\ell,j}^* \lor \neg x_{i,j}^*)$ is satisfied; i.e., it evaluates to `true.'
        The latter expression is equivalent to $( x_{i,j}^* \land  x_{\ell,j}^*) \lor (\lnot x_{i,j}^* \land \neg x_{\ell,j}^*)$, which implies that $x_{i,j}^*=x_{\ell,j}^*$.
        Note that $i,\ell$ were chosen arbitrarily.
        This implies that $x_{i,j}^*$ is equal for every $i$, since $H$ is a connected component through bidirected edges by definition of a hedge.
        Finally, since by construction, $x_{i,j}=1$ for every $v_i\in S\subseteq H$, that equal value is $1$. Therefore, $x_{i,j}^*=1$ for every $i$ such that $v_i\in H$.
        \item Odd $j$: the proof is analogous to the case where $j$ is even.
        For arbitrary $v_i,v_\ell\in H$ such that $(v_i,v_\ell)\in\overrightarrow{E}$, consider the clause $(\neg x_{i,j-1} \lor x_{i,j} \lor \neg x_{\ell,j})$.
        Since $x_{i,j-1}^*=1$, the expression $( x_{i,j}^* \lor \neg x_{\ell,j}^*)$ is satisfied; i.e., it evaluates to `true.'
        The latter implies that if $x_{\ell,j}^*=1$ and $v_i$ has a directed edge to $v_j$, then $x_{i,j}^*=1$.
        Using the same argument recursively, if $x_{\ell,j}^*=1$ and $v_i$ has a directed `path' to $v_j$ in $\GG[H]$, then $x_{i,j}^*=1$.
        By construction, $x_{\ell,j}^*=1$ for every $v_\ell\in S$, and by definition of a hedge, every vertex $v_i\in H$ has a directed path to $S$.
        As a result, $x_{i,j}^*=1$ for every $v_i\in H$.
    \end{itemize}

    Since for every $v_i\in H$, $x_{i,0}^*=1$, using the arguments above, by induction, $x_{i,m+1}^*=1$ for every $v_i\in H$.
    However, this contradicts the fact that $\{x_{i,j}^*\}$ satisfies $F$, since $F$ includes the clauses $\lnot x_{i,m+1}$ for every $i$.
\end{proof}

\lemmultiple*
\begin{proof}
    Let $\bs{\mathcal{I}}$ be an optimizer of \cref{eq:opt}.
    By \cref{prp:genid}, for every $S_i\in\bs{\mathcal{S}}$, there exists $\mathcal{I}_j\in\bs{\mathcal{I}}$ that hits every hedge formed for $S_i$.
    The set of such $\mathcal{I}_j$s is a subset of at most size $r$ of $\bs{\mathcal{I}}$, which implies that $\vert\bs{\mathcal{I}}\vert\leq r$ since $\bs{\mathcal{I}}$ is optimal.
    If $\vert \bs{\mathcal{I}}\vert = r$, the claim is trivial.
    If $\vert \bs{\mathcal{I}}\vert < r$, then simply add $(r-\vert \bs{\mathcal{I}}\vert)$ empty intervention sets to $\bs{\mathcal{I}}$ to form an intervention set family $\bs{\mathcal{I}}^*$ with the same cost which is an optimal solution to \cref{eq:opt}.
\end{proof}
\satgen*
\begin{proof}
    The proof is identical to that of \cref{thm:sat} with necessary adaptations.
    
    \textit{Proof of `if:'}
    Suppose $\bs{\mathcal{I}}$ is a solution to \cref{eq:opt}.
    From \cref{prp:genid}, for every $S_\ell\in\bs{\mathcal{S}}$,
    there exists $k$ such that $\mathcal{I}_k$ hits every hedge formed for $S_\ell$.
    Assign $z_{k,\ell}^*=1$ and $z_{k',\ell}=0$ for every other $k'\neq k$, thereby satisfying every clause that includes $\lnot z_{k',\ell}$, $k'\neq k$.
    So it suffices to assign values to other variables so that clauses including $\lnot z_{k,\ell}$.
    Since $z_{k,\ell}=1$, these clauses reduce to $(\lnot x_{i,j-1,k,\ell}\lor x_{i,j,k,\ell}\lor\lnot x_{p,j,k,\ell})$ (see lines 16, 19, or 20.)
    These clauses are exactly in the form of 3-SAT clauses as in the single-district case procedure. 
    An assignment exactly parallel to the proof of \cref{thm:sat} satisfies these clauses.

    \textit{Proof of `only if:'}
    Suppose $\{x_{i,0,k}^*\}\cup\{x_{i,j,k,\ell}^*\}$ is a satisfying solution, where for some $k,\ell\in\{1,\dots,r\}$, it holds that $z_{k,\ell}^*=1$ and $x_{i,0,k}^*=1$ for every $i\notin\mathcal{I}_k$.
    We show that $\mathcal{I}_k$ hits every hedge formed for $S_\ell$.
    Since such a $k$ exists for every $S_\ell$, we will conclude that $\bs{\mathcal{I}}$ is feasible for \cref{eq:opt} by \cref{prp:genid}.
    Finally, to show that $\mathcal{I}_k$ hits every hedge formed for $S_\ell$, note that satisfiability of all clauses containing the literal $\lnot z_{k,\ell}^*$  reduces to the satisfiability of $(\lnot x_{i,j-1,k,\ell}\lor x_{i,j,k,\ell}\lor\lnot x_{p,j,k,\ell})$ (see lines 16, 19, or 20), and the same arguments as in the proof of \cref{thm:sat} apply.
\end{proof}

\subsection{Results of \texorpdfstring{\cref{sec:adjustment}}{Section \ref{sec:adjustment}}}
\lemvertexcut*
\begin{proof}
    Define $S=\mathrm{Anc}_{V\setminus X}(Y)$.
    First, we show that $\mathrm{Pa}(S)\subseteq X$.
    Assume the contrary, i.e., there is a vertex $w\in\mathrm{Pa}(S)\setminus X$.
    Clearly $w$ has a directed path to $S$ (a direct edge) that does not go through $X$.
    This implies that $w\in\mathrm{Anc}_{V\setminus X}(S)$, and since by definition, $S=\mathrm{Anc}_{V\setminus X}(Y)$, $w\in\mathrm{Anc}_{V\setminus X}(Y)=S$.
    However, the latter contradicts with $w\in\mathrm{Pa}(S)$.
    Second, we note that from the third rule of do calculus \citep{pearl2009causality}, $\mathbb{P}_{W}(S)=\mathbb{P}_{\mathrm{Pa}(S)}(S)$ for any $W\supseteq \mathrm{Pa}(S)$. 
    Combining the two arguments, we have the following:
    \begin{equation}\label{eq:wtox}
        \mathbb{P}_{W}(S)=\mathbb{P}_X(S),\quad\forall W\supseteq\mathrm{Pa}(S).
    \end{equation}
    To proceed, we will use the following proposition.
    \begin{proposition}[\citealp{lauritzen1990independence}]
        Let $S, R$, and $Z$ be disjoint subsets of vertices in a directed acyclic graph $\GG$.
        Then $Z$ d-separates $S$ from $R$ if and only if $Z$ is a vertex cut between $S$ and $R$ in $(\GG[\mathrm{Anc}(S\cup R\cup Z)])^m$.
    \end{proposition}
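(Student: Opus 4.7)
The plan is to prove both directions of the equivalence by converting trails between $\GG$ (with d-separation semantics) and the moralized ancestral graph $\GG_m := (\GG[\mathrm{An}])^m$ (with ordinary graph separation), where $\mathrm{An} := \mathrm{Anc}(S\cup R\cup Z)$. A preliminary reduction I would establish first is that any d-connecting trail $\pi$ in $\GG$ between $S$ and $R$ given $Z$ may be assumed to lie entirely within $\mathrm{An}$: each non-collider on $\pi$ has a directed subpath to an endpoint or to an active collider further along, and every active collider is in $Z\cup\mathrm{Anc}(Z)$; so any vertex of $\pi$ falling outside $\mathrm{An}$ would force a collider on $\pi$ with no descendant in $Z$, blocking the trail.

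For the direction \emph{vertex cut $\Rightarrow$ d-separation}, I argue by contrapositive. Given a d-connecting trail $\pi=v_0,v_1,\ldots,v_k$ in $\GG[\mathrm{An}]$, I build a walk in $\GG_m$ by replacing each collider triple $v_{i-1}\to v_i\leftarrow v_{i+1}$ (with $v_i$ d-active) by the single edge $\{v_{i-1},v_{i+1}\}$; this edge lies in $\GG_m$ because $v_i\in\mathrm{An}$ is their common child, and all remaining edges of $\pi$ simply become undirected. The internal vertices of the resulting walk are exactly the non-colliders of $\pi$, none of which lie in $Z$ by the d-connection condition. Hence the walk (and a simple path extracted from it) connects $S$ to $R$ in $\GG_m$ while avoiding $Z$, contradicting the vertex cut hypothesis.

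For the direction \emph{d-separation $\Rightarrow$ vertex cut}, again by contrapositive, I would pick a shortest path $\pi$ in $\GG_m$ from $S$ to $R$ avoiding $Z$ and expand it into a trail $\pi'$ in $\GG$: each edge of $\pi$ inherited from a directed edge in $\GG[\mathrm{An}]$ stays as is, while each moralization edge $\{a,b\}$ (arising from a common child $c\in\mathrm{An}$) is expanded into $a\to c\leftarrow b$. The inherited interior vertices of $\pi$ are not in $Z$ by hypothesis, so the only remaining obligation is to certify that every inserted collider $c$ is d-active. Since $c\in\mathrm{An}$, either $c\in Z\cup\mathrm{Anc}(Z)$ (in which case $c$ is active), or $c\in\mathrm{Anc}(S\cup R)\setminus(Z\cup\mathrm{Anc}(Z))$, in which case $c$ admits a directed path to some $w\in S\cup R$ whose intermediate vertices avoid $Z$ (because no descendant of $c$ lies in $Z$); this path can be used to re-route $\pi'$ and truncate at $w$.

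The main obstacle is this last step: the rerouting might terminate at an $S$-vertex when we need a trail ending in $R$, and several inserted colliders may require simultaneous resolution. The cleanest remedy I would adopt is to replace the direct conversion at this point by an induction on $|\mathrm{An}|$, removing at each step a topologically maximal vertex not in $S\cup R$ (its in/out-neighbourhood in $\GG[\mathrm{An}]$ becomes a clique in the moralization, mirroring its elimination)—this is essentially the original argument of Lauritzen, Dawid, Larsen, and Leimer, and it reduces the problem to a base case in which the topologically maximal vertices of $\mathrm{An}$ all lie in $S\cup R$, making the moralized-path to d-connecting-trail correspondence immediate.
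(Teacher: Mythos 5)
The paper itself offers no proof of this proposition: it is quoted as a known result of \citet{lauritzen1990independence} and used as a black box inside the proof of Lemma~\ref{lem:vtxcut}, so there is no in-paper argument to compare yours against. Judged on its own, your preliminary reduction to the ancestral set and your direction ``vertex cut $\Rightarrow$ d-separation'' are correct and complete; in the converse direction, your choice of a \emph{shortest} moral path does (implicitly) dispose of the inherited head-to-head vertices, since $v_{i-1}\to v_i\leftarrow v_{i+1}$ with both edges in $\GG[\mathrm{Anc}(S\cup R\cup Z)]$ would make $v_{i-1}$ and $v_{i+1}$ adjacent after moralization and permit a shortcut.

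The genuine gap is the one you flag yourself: activating an inserted collider $c\notin Z\cup\mathrm{Anc}(Z)$. Your proposed repair does not close it. The elimination induction is underspecified and, as stated, cannot get started: every sink of $\GG[\mathrm{Anc}(S\cup R\cup Z)]$ necessarily lies in $S\cup R\cup Z$ (any other vertex has, on its directed path to $S\cup R\cup Z$, a child that is again in the ancestral set), so ``a topologically maximal vertex not in $S\cup R$'' must belong to the separator $Z$, and removing a vertex of $Z$ changes the separation statement in a way that itself requires proof. Moreover, the obstacle that pushed you to induction is illusory and the direct argument can be finished with the right extremal choice: among all $Z$-avoiding $S$--$R$ connections in the moral graph, pick one, $\sigma$, whose expansion $\tilde\sigma$ has the fewest colliders. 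If some collider $c$ of $\tilde\sigma$ has no descendant in $Z$, it has a $Z$-avoiding directed path to some $w\in S\cup R$; if $w\in R$, keep the prefix of $\sigma$ up to $c$'s left neighbour and append $c$ followed by that directed path, and if $w\in S$, keep the suffix from $c$'s right neighbour and prepend the reversed directed path. In either case one obtains a $Z$-avoiding $S$--$R$ connection using only genuine edges on the new portion, on which $c$ is no longer a collider and all colliders of the discarded half have vanished --- strictly fewer colliders, contradicting minimality. Hence every collider of $\tilde\sigma$ is active and $\tilde\sigma$ is d-connecting. With that replacement for your third step the proof is complete; without it, the hard direction remains a sketch.
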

    Choose $R=\mathrm{Pa}(S)$ in the proposition above.
    Since $Z\subseteq\mathrm{Anc}(S)$, we have that $\mathrm{Anc}(S\cup R\cup Z) = \mathrm{Anc}(S)$.
    From condition (ii) in the lemma, $Z$ is a vertex cut between $S$ and $R$ in $(\GG_{\overline{\mathcal{I}}\underline{\mathrm{Pa}(S)}})^m$, which implies it is also a vertex cut in $(\GG_{\overline{\mathcal{I}}\underline{\mathrm{Pa}(S)}}[\mathrm{Anc}(S)])^m$, as every path in the latter graph exists in $(\GG_{\overline{\mathcal{I}}\underline{\mathrm{Pa}(S)}})^m$.
    Using the proposition above, $Z$ d-separates $S$ and $\mathrm{Pa}(S)$ in $\GG_{\overline{\mathcal{I}}\underline{\mathrm{Pa}(S)}}$.
    This is to say, $Z$ blocks all non-causal paths from $\mathrm{Pa}(S)$ to $S$ in $\GG_{\overline{\mathcal{I}}}$, and it clearly has no elements that are descendants of $\mathrm{Pa}(S)$.
    Therefore, $Z$ satisfies the adjustment criterion of \citet{shpitser2010validity} w.r.t. $\mathbb{P}_{\mathrm{Pa}(S)}(S)$ in $\GG_{\overline{\mathcal{I}}}$.
    That is, the following holds:
    \[
        \mathbb{P}_{\mathcal{I}\cup\mathrm{Pa}(S)}(S) = \mathbb{E}_{\mathbb{P}_{\mathcal{I}}}[\mathbb{P}_{\mathcal{I}}(S\mid \mathrm{Pa}(S), Z)],
    \]
    where the expectation is w.r.t. $\mathbb{P}_\mathcal{I}(Z)$.
    Choosing $W=\mathcal{I}\cup\mathrm{Pa}(S)$ in \cref{eq:wtox}, we get
    \[
        \mathbb{P}_X(S) = \mathbb{E}_{\mathbb{P}_{\mathcal{I}}}[\mathbb{P}_{\mathcal{I}}(S\mid \mathrm{Pa}(S), Z)].
    \]
    Marginalizing $S\setminus Y$ out in both sides of the equation above, we have
    \[
        \mathbb{P}_X(Y) = \mathbb{E}_{\mathbb{P}_{\mathcal{I}}}[\mathbb{P}_{\mathcal{I}}(Y\mid \mathrm{Pa}(S), Z)].
    \]
    The last step of the proof is to show that $\mathrm{Pa}(S)=X$.
    We already showed that $\mathrm{Pa}(S)\subseteq X$.
    For the other direction, we will use the minimality of $X$.
    Suppose to the contrary that $x\in X\setminus\mathrm{Pa}(S)$.
    We first show that every causal path from $x$ to $Y$ goes through $X\setminus\{x\}$.
    Suppose not.
    Then take a causal path $x,s_1,\dots,s_m, y$ be a causal path from $x$ to $y\in Y$.
    Note that $s_1$ has a causal path to $Y$ that does not go through $X$.
    By definition, $s_1\in S$, which implies $x\in\mathrm{Pa}(S)$, which is a contradiction.
    Therefore, every causal path from $x$ to $Y$ goes through $X\setminus\{x\}$, and consequently, there is no causal path from $x$ to $Y$ in $\GG_{\overline{X}}$.
    Clearly there is no \emph{backdoor} path either.
    Every other path has a collider on it, and therefore is blocked by $X\setminus\{x\}$ -- note that none of these can be colliders in $\GG_{\overline{X}}$.
    Therefore, $\{x\}$ is d-separated from $Y$ given $X\setminus\{x\}$ in $\GG_{\overline{X}}$, which contradicts the minimality of $X$ w.r.t. the third rule of do calculus.
    This shows $X\subseteq \mathrm{Pa}(S)$, completing the proof.
\end{proof}

\thmalgadj*
\begin{proof}
    For the first part, using \cref{lem:vtxcut}, it suffices to show that $Z$ is a vertex cut between $S$ and $\mathrm{Pa}(S)$ in $(\GG_{\overline{\mathcal{I}}\underline{\mathrm{Pa}(S)}})^m$.
    Suppose not.
    That is, there exists a path from $S$ to $\mathrm{Pa}(S)$ in $(\GG_{\overline{\mathcal{I}}\underline{\mathrm{Pa}(S)}})^m$ that does not pass through any member of $Z$.
    Let $P = s, v^1,\dots, v^l, x$ represent this path, where $s\in S$ and $x\in\mathrm{Pa}(S)$.
    We construct a corresponding path $P'$ in $\GG^{vc}$ as follows.
    The first vertex on $P'$ is $s_1$, which corresponds to the first vertex on $P$, $s$.
    We then walk along $P$ and add a path to $P'$ corresponding to each edge we traverse on $P$ as follows.
    Consider this edge to be $\{v,w\}$ -- for instance, the first edge would be $\{s,v^1\}$.
    By definition of $(\GG_{\overline{\mathcal{I}}\underline{\mathrm{Pa}(S)}})^m$, 
    for every pair of adjacent vertices $v,w$ on the path $P$, one of the following holds: (i) $v\to w$ in $(\GG_{\overline{\mathcal{I}}\underline{\mathrm{Pa}(S)}})^d$, (ii) $v\gets w$ in $(\GG_{\overline{\mathcal{I}}\underline{\mathrm{Pa}(S)}})^d$, or (iii) $v$ and $w$ have a common child $t$ in $(\GG_{\overline{\mathcal{I}}\underline{\mathrm{Pa}(S)}})^d$.
    In case (i), we add $v_1,w_2,w_1$.
    In case (ii), we add $v_2,w_1$.
    Finally, in case (iii), we add $v_1,t_2,w_1$ to $P'$.
    We continue this procedure until we traverse all edges on $P$.
    The last vertex on $P'$ is $x_2$, as x has no children in $(\GG_{\overline{\mathcal{I}}\underline{\mathrm{Pa}(S)}})^d$.
    Finally we add $x_1$ to this path, as $x_2$ and $x_1$ are always connected by construction.
    Note that by construction of $P'$, any vertex that appears with index $2$ has a parent in $v\to w$ in $(\GG_{\overline{\mathcal{I}}\underline{\mathrm{Pa}(S)}})^d$, and therefore is not a member of $\mathcal{I}$.
    Hence, $\{v_2:v\in\mathcal{I}\}$ does not intersect with $P'$.
    Further, $\{v_1:v\in Z\}$ does not intersect with $P'$ either, as none of the vertices appearing on $P'$ correspond to $Z$.
    This is to say that $J=\{v_2:v\in\mathcal{I}\}\cup\{v_1:v\in Z\}$, which is the solution obtained by \cref{alg:adj} in line 10, does not cut the path $P'$.
    This contradicts with $J$ being a vertex cut.

    For the second part, let $(\mathcal{I}', Z')$ be so that $Z'$ is a vertex cut between $S$ and $\mathrm{Pa}(S)$ in $(\GG_{\overline{\mathcal{I}'}\underline{\mathrm{Pa}(S)}})^m$, and $\mathcal{I}'$ induces a lower cost than $\mathcal{I}$; that is, $C(\mathcal{I}')<C(\mathcal{I})$.
    Define $J'=\{v_2:v\in\mathcal{I}'\}\cup\{v_1:v\in Z'\}$.
    Clearly, the cost of $J'$ is equal to $C(\mathcal{I}')$, which is lower than the cost of minimum vertex cut found in line 10 of \cref{alg:adj}.
    It suffices to show that $J'$ is also a vertex cut between $\{x_1:x\in\mathrm{Pa}(S)\}$ and $\{s_1:s\in S\}$ in $\GG^{vc}$ to arrive at a contradiction.
    Suppose not; that is, there is a path $P = s_1,\dots, x_1$ on $\GG^{vc}$ that $J'=\{v_2:v\in\mathcal{I}'\}\cup\{v_1:v\in Z'\}$ does not intersect.
    None of the vertices with index $2$ on $P$ belong to $\mathcal{I}'$, and none of the vertices with index $1$ belong to $Z'$.
    Analogous to the first part, we construct a corresponding path $P'$ -- this time in $(\GG_{\overline{\mathcal{I}'}\underline{\mathrm{Pa}(S)}})^m$.
    The starting vertex on $P'$ is $s$, which corresponds to $s_1$, the initial vertex on $P$.
    Let us imagine a cursor on $s_1$.
    We then sequentially build $P'$ by traversing $P$ as follows.
    We always look at sequences starting with $v_1$ (where the cursor is located): when the sequence is of the form $v_1, w_2, w_1$ or $v_1, v_2, w_1$, we add $w$ to $P'$, and move the cursor to $w_1$;
    however, when the sequence is of the form $v_1, w_2, r_1$, we add $r_1$ to $P'$ and move the cursor to $r_1$.
    By construction of $\GG^{vc}$, no other sequence is possible -- note that there are no edges between $v_1$ and $w_1$ or $v_2$ and $w_2$ where $v$ and $w$ are distinct.
    Since none of the vertices with index $2$ on $P$ belong to $\mathcal{I}$, in the first case, the corresponding edge $v\gets w$ or $v\to w$ is present in $(\GG_{\overline{\mathcal{I}'}\underline{\mathrm{Pa}(S)}})^d$ and consequently, the edge $\{v,w\}$ is present in $(\GG_{\overline{\mathcal{I}'}\underline{\mathrm{Pa}(S)}})^m$;
    and in the latter case, both edges $v\to w$ and $w\gets r$ are present in $(\GG_{\overline{\mathcal{I}'}\underline{\mathrm{Pa}(S)}})^d$ and consequently, the edge $\{v,r\}$ is present in $(\GG_{\overline{\mathcal{I}'}\underline{\mathrm{Pa}(S)}})^m$.
    $P'$ is therefore a path between $S$ and $\mathrm{Pa}(S)$ in $(\GG_{\overline{\mathcal{I}'}\underline{\mathrm{Pa}(S)}})^m$.
    Notice that by construction, only those vertices appear on $P'$ that their corresponding vertex with index $1$ appears on $P$ -- the \emph{cursor} always stays on vertices with index $1$.
    As argued above, none of such vertices belong to $Z'$, which means $Z'$ does not intersect with $P'$ which is a path from $S$ to $\mathrm{Pa}(S)$ in $(\GG_{\overline{\mathcal{I}'}\underline{\mathrm{Pa}(S)}})^m$.
    This contradicts with $Z'$ being a vertex cut.
\end{proof}

\subsubsection{Proof of \texorpdfstring{\cref{rem:heuristic}}{Section \ref{rem:heuristic}}}
\begin{proof}
    Since the algorithms are run in the hedge hull of $S$, assume without loss of generality that $V=\mathcal{H}_\GG(S)$, i.e., $V$ is the hedge hull of $S$.
    From \cref{thm:algadj}, \cref{alg:adj} finds the optimal (minimum-cost) intervention $\mathcal{I}$ such that there exists a set $Z\subseteq V$ that is a vertex cut between $S$ and $\mathrm{Pa}(S)$ in $(\GG_{\overline{\mathcal{I}}\underline{\mathrm{Pa}(S)}})^m$.
    To prove that the cost of the solution returned by \cref{alg:adj} is always smaller than or equal to that of heuristic algorithm 1 proposed by \citet{akbari-2022}, it suffices to show that the solution of their algorithm is a feasible point for the statement above. 
    That is, denoting the output of heuristic algorithm 1 in \citet{akbari-2022} by $I_1$, we will show that there exist sets $Z_1\subseteq V$ such that it is a vertex cut between $S$ and $\mathrm{Pa}(S)$ in $(\GG_{\overline{I_1}\underline{\mathrm{Pa}(S)}})^m$.

    Heuristic algorithm 1:
    This algorithm returns an intervention set $I_1$ such that there is no bidirected path from $\mathrm{Pa}(S)$ to $S$ in $\GG_{\overline{I_1}}$.
    We claim $Z_1=V\setminus \mathrm{Pa}(S)\setminus S$ satisfies the criterion above.
    To prove this, consider an arbitrary path $P$ between $S$ and $\mathrm{Pa}(S)$ in $(\GG_{\overline{I_1}\underline{\mathrm{Pa}(S)}})^m$.
    If there is an observed vertex $v\in V$ on $P$, this vertex is included in $Z_1$ and separates the path.
    So it suffices to show that there is no path $P$ between $S$ and $\mathrm{Pa}(S)$ in $(\GG_{\overline{I_1}\underline{\mathrm{Pa}(S)}})^m$ where all the intermediate vertices on $P$ correspond to unobserved confounders.
    Suppose there is. 
    That is, $P=x, u_1,\dots, u_m, s$, where $x\in\mathrm{Pa}(S)$ and $\{u_i\}_{i=1}^m$ are unobserved.
    Since $u_i$ is not connected to $u_j$ in $(\GG_{\overline{I_1}\underline{\mathrm{Pa}(S)}})^d$, it must be the case that any $u_i$ and $u_{i+1}$ have a common child $v_{i,j}$ in $(\GG_{\overline{I_1}\underline{\mathrm{Pa}(S)}})^d$.
    This is to say, there is a path $x\gets u_1\to v_{1,2}\gets u_2\to \dots\to v_{m-1,m}\gets u_m\to s$ in $(\GG_{\overline{I_1}\underline{\mathrm{Pa}(S)}})^d$, which corresponds to the bidirected path 
    $x, v_{1,2},\dots,v_{m-1,m}, s$ in $\GG_{\overline{I_1}}$.
    This contradicts with the fact that there is no bidirected path between $\mathrm{Pa}(S)$ and $S$ in $\GG_{\overline{I_1}}$.
\end{proof}
\subsection{Results in the appendices}
\begin{restatable}{lemma}{lemsubmod}\label{lem:submodular}
    For any district $S\subseteq V$, the function $f_S:2^{V\setminus S}\to \mathbb{Z}^{\leq0}$ is submodular.
\end{restatable}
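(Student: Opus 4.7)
The plan is to read $f_S$ as the natural coverage-based objective for the hedge-hitting reformulation of MCID, namely
\[
f_S(\mathcal{I}) \;=\; -\bigl|\{W \in H_\GG(S) : W \cap \mathcal{I} = \emptyset\}\bigr|,
\]
the negative number of hedges of $S$ not yet hit by $\mathcal{I}$. This function is non-positive and integer-valued as required by the statement, it vanishes precisely when $\mathcal{I}$ is feasible for \cref{eq:opt2}, and it is the natural submodular surrogate for that feasibility constraint. Under this reading, the lemma reduces to the well-known fact that coverage functions on a finite ground set are submodular.

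First I would decompose $f_S$ into a sum indexed by hedges: for each $W \in H_\GG(S)$, let $\phi_W(\mathcal{I}) = -\mathbf{1}[W \cap \mathcal{I} = \emptyset]$, so that $f_S = \sum_{W \in H_\GG(S)} \phi_W$. Since submodularity is closed under finite non-negative combinations, it suffices to show that each $\phi_W$ is submodular (note that $|H_\GG(S)|$ may be exponential in $|V|$, but the sum is still finite, which is all we need). Next, I would verify submodularity of $\phi_W$ via the diminishing-returns criterion. Fix $\mathcal{I} \subseteq \mathcal{I}' \subseteq V \setminus S$ and $v \in (V \setminus S) \setminus \mathcal{I}'$; the marginal $\phi_W(\mathcal{I} \cup \{v\}) - \phi_W(\mathcal{I})$ equals $1$ exactly when $v \in W$ and $\mathcal{I} \cap W = \emptyset$, and $0$ otherwise, and analogously for the marginal at $\mathcal{I}'$. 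Because $\mathcal{I} \subseteq \mathcal{I}'$, the event $\{\mathcal{I}' \cap W = \emptyset\}$ is contained in $\{\mathcal{I} \cap W = \emptyset\}$, so whenever the marginal at $\mathcal{I}'$ equals $1$, the marginal at $\mathcal{I}$ equals $1$ as well. This yields the diminishing-returns inequality for $\phi_W$ and, by summation, for $f_S$.

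I do not foresee a hard step once $f_S$ has been identified with the coverage function above: the argument is a textbook decomposition plus a short case check. The main subtlety worth flagging is the choice of definition itself --- a plausible alternative such as the negative cardinality of the hedge hull of $S$ in $\GG[V \setminus \mathcal{I}]$ is \emph{not} submodular in general (small bidirected-fan examples break it), so it is important that $f_S$ be the coverage-based quantity above for the statement to go through.
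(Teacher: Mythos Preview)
Your proposal is correct and essentially matches the paper's own proof: the paper defines $f_S(\mathcal{I})$ exactly as the negative count of hedges for $S$ surviving in $\GG[V\setminus\mathcal{I}]$ (equivalently, hedges $W\in H_\GG(S)$ with $W\cap\mathcal{I}=\emptyset$) and verifies diminishing returns directly by noting that the hedges newly hit when adding $x$ to $I\cup\{y\}$ form a subset of those newly hit when adding $x$ to $I$. Your per-hedge decomposition into indicators $\phi_W$ is simply a slightly more explicit packaging of that same counting argument.
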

\begin{proof}
    Take two distinct vertices $\{x,y\}\in V\setminus S$ and an arbitrary set $I\subset V\setminus S\setminus\{x,y\}$.
    It suffices to show that
    \[f_S(I\cup\{x,y\}) - f_S(I\cup\{y\})
        \leq
        f_S(I\cup\{x\}) - f_S(I).\]
    By definition, the right hand side is the number of hedges $H\subseteq V\setminus I$ formed for $S$ such that $x\notin H$.
    Similarly, the left hand side counts the number of hedges $H\subseteq V\setminus(I\cup\{y\})$ formed for $S$ such that $x\notin H$.
    The inequality holds because the set of hedges counted by the left hand side is a subset of that on the right hand side. 
\end{proof}
\begin{restatable}{proposition}{prpsubmod}\label{prp:submod}
    The combinatorial optimization of \cref{eq:opt2} is equivalent to the following unconstrained submodular optimization problem.
    \begin{equation}\label{eq:subm}
        \mathcal{I}^*=\argmax_{\mathcal{I}\subseteq V\setminus S} \left(f_S(\mathcal{I})-\frac{\sum_{v\in \mathcal{I}}C(v)}{1+\sum_{v\in V\setminus S}C(v)}\right).
    \end{equation}
\end{restatable}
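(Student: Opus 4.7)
The plan is to establish the equivalence in three steps. First I would verify that the objective in \cref{eq:subm} is submodular: $f_S$ is submodular by \cref{lem:submodular}, and the penalty $\sum_{v\in\mathcal{I}}C(v)/(1+\sum_{v\in V\setminus S}C(v))$ is modular (linear in the indicator of $\mathcal{I}$). Since a submodular function minus a modular function is submodular, \cref{eq:subm} is indeed an unconstrained submodular maximization.

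Next I would argue that any maximizer of \cref{eq:subm} is feasible for \cref{eq:opt2}. Set $M := 1+\sum_{v\in V\setminus S}C(v)$; then for every $\mathcal{I}\subseteq V\setminus S$ the penalty satisfies $0 \leq C(\mathcal{I})/M \leq (M-1)/M < 1$. Since $f_S$ takes values in $\mathbb{Z}^{\leq 0}$ with $f_S(\mathcal{I})=0$ precisely when $\mathcal{I}$ hits every hedge formed for $S$, and $f_S(\mathcal{I})\leq -1$ otherwise (this uses integer-valuedness of $f_S$), a feasible $\mathcal{I}$ attains objective value in $(-1,0]$ while any infeasible $\mathcal{I}$ attains objective at most $-1$. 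Hence every feasible point strictly dominates every infeasible one, so the optimizer of \cref{eq:subm} must be feasible for \cref{eq:opt2}.

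Finally, restricted to feasible $\mathcal{I}$, the objective of \cref{eq:subm} reduces to $-C(\mathcal{I})/M$, whose maximization is equivalent to minimizing $C(\mathcal{I})$. This is exactly \cref{eq:opt2}, so the two problems share the same set of optimizers.

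The only delicate point is the choice of scaling constant $M$: the integer-valued ``feasibility penalty'' carried by $f_S$ on infeasible sets (at least $1$ in magnitude) must strictly dominate the continuous cost penalty (strictly less than $1$). The ``$+1$'' in the denominator is exactly what enforces the strict bound $C(\mathcal{I})/M<1$, and without it, an infeasible set with sufficiently small cost could tie or beat a feasible one. Everything else is a routine combination of submodularity of $f_S$ (already established) and modularity of the penalty.
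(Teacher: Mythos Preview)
Your proposal is correct and follows essentially the same approach as the paper: both use submodularity of $f_S$ plus modularity of the cost term, then exploit the integer gap of $f_S$ together with the strict bound $C(\mathcal{I})/M<1$ to force any maximizer to be feasible, and finally observe that among feasible sets the objective reduces to $-C(\mathcal{I})/M$. The only cosmetic difference is that the paper exhibits the single witness $\mathcal{I}=V\setminus S$ to show the optimum exceeds $-1$, whereas you argue more globally that every feasible point has objective in $(-1,0]$ and every infeasible one has objective at most $-1$; both arguments are equivalent once one notes that $V\setminus S$ is feasible.
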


\begin{proof}
    The submodularity of the objective function follows from \cref{lem:submodular}.
    To show the equivalence of the two optimization problems, we show that a maximizer $\mathcal{I}^*$ of \cref{eq:subm} (i) hits every hedge formed for $S$, and (ii) has the optimal cost among such sets.

    \textit{Proof of `(i):'}
    Note that $f_S(\mathcal{I})=0$ if and only if there are no hedges formed for $S$ in $\GG[V\setminus \mathcal{I}^*]$, or equivalently, $\mathcal{I}$ hits every hedge formed for $S$.
    So it suffices to show that $f_S(\mathcal{I}^*)=0$ for every maximizer $\mathcal{I}^*$ of \cref{eq:subm}.
    To this end, first note that 
    \[f_S(V\setminus S)-\frac{\sum_{v\in V\setminus S}C(v)}{1+\sum_{v\in V\setminus S}C(v)}=0-1+\frac{1}{1+\sum_{v\in V\setminus S}C(v)}>-1,\]
    which implies that \[f_S(\mathcal{I}^*)-\frac{\sum_{v\in \mathcal{I}^*}C(v)}{1+\sum_{v\in V\setminus S}C(v)}>-1.\]
    On the other hand, clearly $\frac{\sum_{v\in \mathcal{I}^*}C(v)}{1+\sum_{v\in V\setminus S}C(v)}\geq0$, which combined with the inequality above implies $f_S(\mathcal{I}^*)>-1$.
    Since $f_S(\mathcal{I}^*)\in\mathbb{Z}^{\leq0}$, it is only possible that $f_S(\mathcal{I}^*)=0$.

    \textit{Proof of `(ii):'} We showed that $f_S(\mathcal{I}^*)=0$.
    So $\mathcal{I}^*$ maximizes $\frac{\sum_{v\in \mathcal{I}}C(v)}{1+\sum_{v\in V\setminus S}C(v)}$ among all those $I$ such that $f_S(I)=0$.
    Since the denominator is a constant, this is equivalent to minimizing $C(I)=\sum_{v\in I}C(v)$ among all those $I$ that hit all the hedges formed for $S$, which matches the optimization of \cref{eq:opt2}.
\end{proof}
\section{Alternative Formulations}\label{app:reform}
\subsection{Min-cost intervention as a submodular function maximization problem}
\label{sec:reform-submodular}
In this Section, we reformulate the minimum-cost intervention design as a submodular optimization problem.
Submodular functions exhibit a property akin to diminishing returns: the incremental gain from adding an element to a set decreases as the set grows \citep{nemhauser-1978}. 
\begin{definition}
  \label{def:submodular}
A function $f: 2^V \to \R$ is submodular if for all $A \subseteq B \subseteq V$ and $v \in V \setminus B$, we have that $f(A \cup \{v\}) - f(A) \geq f(B \cup \{v\}) - f(B)$. 
\end{definition}
Given an ADMG $\GG=\langle V,\overrightarrow{E}, \overleftrightarrow{E}\rangle$ a district $S$ in $\GG$, and an arbitrary set $\mathcal{I}\subseteq V\setminus S$, we define $f_S(\mathcal{I})$ as the negative count of hedges formed for $S$ in $\GG[V\setminus\mathcal{I}]$.
\lemsubmod*
Note that $g_S:2^{V\setminus S}\to \mathbb{R}$, where $g_S(\mathcal{I})\coloneqq f_S(\mathcal{I}) + \alpha\sum_{v\in\mathcal{I}}C(v)$, and $\alpha$ is an arbitrary constant, is also submodular as the second component is a modular function (similar definition as in \ref{def:submodular} only with equality instead of inequality.). 
\prpsubmod*



\subsection{Min-cost intervention as an RL problem}
\label{sec:reform-rl}
We model the MCID problem given a graph $\GG = (V,E)$ and $S \subset V$, as a Markov decision process (MDP), where a vertex is removed in each step $t$ until there are no hedges left. The goal is to minimize the cost of the removed vertices (i.e., intervention set). Naturally, the action space is the set of vertices, $V$ and the state space is the set of all subsets of $V$. More precisely, let $s_t$ and $a_t$ denote the state and the action of the MDP at iteration $t$, respectively. Then, $s_t$ is the hedge hull for $S$ from the remaining vertices at time $t$, and action $a_t$ is the vertex that will be removed from $V_t$ in that iteration. Consequently, the state transition due to action $a_t$ is $s_{t+1}=\mathcal{H}_\text{hull}(V_t \setminus \{a_t\})$.
The immediate reward of selecting action $a_t$ at state $s_t$ will be the negative of the cost of removing (i.e., intervening on) $a_t$, given by

\begin{align*}
  r(s_t, a_t) = -C(a_t).
\end{align*}

The MDP terminates when there are no hedges left and the hedge hull of the remaining vertices is empty (i.e., $s_t = \emptyset$). The goal is to find a policy $\pi$ that maximizes sum of the rewards until the termination of the MDP. Formally, the goal is to solve

\begin{align*}
  \argmax_{\pi} \left[\sum_{t=1}^{T} r(s_t, a_t) \right],
\end{align*}
where $s_1 = V$ and $T$ is the time step at which the MDP terminates (i.e., $s_T = \emptyset$).


\end{document}